\title{Discrete Flow Matching}
\author[1]{Itai Gat}
\author[1]{Tal Remez}
\author[2]{Neta Shaul}
\author[1]{Felix Kreuk}
\author[1]{Ricky T. Q. Chen} 
\author[1]{Gabriel Synnaeve}
\author[1]{Yossi Adi} 
\author[1]{\\Yaron Lipman}
\affiliation[1]{Meta AI, FAIR}
\affiliation[2]{Weizmann Institute}
\abstract{  
Despite Flow Matching and diffusion models having emerged as powerful generative paradigms for continuous variables such as images and videos, their application to high-dimensional discrete data, such as language, is still limited. 
In this work, we present Discrete Flow Matching, a novel discrete flow paradigm designed specifically for generating discrete data. 
Discrete Flow Matching offers several key contributions: 
\textit{(i)} it works with a general family of probability paths interpolating between source and target distributions; 
\textit{(ii)} it allows for a generic formula for sampling from these probability paths using learned posteriors such as the probability denoiser ($x$-prediction) and noise-prediction ($\epsilon$-prediction); 
\textit{(iii)} practically, focusing on specific probability paths defined with different schedulers improves generative perplexity compared to previous discrete diffusion and flow models; and 
\textit{(iv)} by scaling Discrete Flow Matching models up to 1.7B parameters, we reach 6.7\%\ Pass@1 and 13.4\%\ Pass@10 on HumanEval and 6.7\%\ Pass@1 and 20.6\%\ Pass@10 on \emph{1-shot} MBPP coding benchmarks. 
Our approach is capable of generating high-quality discrete data in a non-autoregressive fashion, significantly closing the gap between autoregressive models and discrete flow models.
}
\begin{document}

\maketitle

\section{Introduction}
Despite the remarkable success of diffusion and flow models in generating continuous spatial signals such as images~\citep{ho2020denoising,rombach2022high,esser2024scaling} and videos~\citep{singer2022make,blattmann2023stable}, their performance still falters when applied to discrete sequential data compared to autoregressive models. Recent progress in adapting diffusion and flow models to the discrete setting has been made via mostly two approaches: embedding the discrete data in continuous space and applying continuous diffusion \citep{dieleman2022continuous,stark2024dirichlet} or designing diffusion or flow processes over discrete state spaces~\citep{austin2021structured,campbell2022continuous}.

In this paper, we pursue the discrete flow approach of~\citet{campbell2024generative} and introduce Discrete Flow Matching, a theoretical framework and algorithmic methodology for discrete flow models that yields a state-of-the-art discrete non-autoregressive generative approach. Surprisingly, Discrete Flow Matching exhibits similarities with the continuous Flow Matching~\citep{lipman2022flow} approach proposed for continuous signals. Notably, its \emph{generating probability velocity}, employed in the sampling algorithm, is identical in form to its continuous counterpart. Additionally, Discrete Flow Matching offers the following advancements and simplifications over prior methods: It encompasses a more comprehensive family of probability paths transforming source (noise) distributions into target (data) distributions, accommodating arbitrary source-target couplings and time-dependent schedulers. Furthermore, it provides a unified formulation for the generating probability velocity directly expressed in terms of the learned posteriors and schedulers, along with a unified and general theory and algorithm for corrector sampling and iterations. In practice, we observe that path and corrector schedulers are pivotal, and their proper tuning leads to substantial improvements in generation quality. We have trained a 1.7B parameter Discrete Flow Matching model on the same data mix as in Llama-2~\citep{touvron2023llama} and CodeLlama~\citep{roziere2023code}, achieving 6.7\%\,Pass@1 and 13.4\%\,Pass@10 on HumanEval and 6.7\%\,Pass@1 and 20.6\%\,Pass@10 on \emph{1-shot} MBPP coding benchmarks;~\Cref{fig:code_teaser} shows some code generation examples. In conditional text generation our model produces texts with a generative perplexity score of 9.7 as measured by the Llama-3 8B model, surpassing a 1.7B autoregressive model that achieves 22.3 and not far from the Llama-2 7B model that achieves 8.3 in generative perplexity score. We strongly believe that Discrete Flow Matching represents a significant step in bridging the performance gap between discrete diffusion and autoregressive models, and that further enhancements are possible by exploring the vast design space that Discrete Flow Matching has to offer.

\definecolor{LightGray}{gray}{0.97}
\begin{figure}
\scriptsize
\begin{subfigure}[t]{0.30\textwidth}
\begin{minted}[bgcolor=LightGray]
{python}
def fib(n: int):
    """Return n-th Fibonacci 
    number.
    >>> fib(10)
    55
    >>> fib(1)
    1
    >>> fib(8)
    21
    """
\end{minted}
\vspace{-22pt}
\begin{minted}[bgcolor=pastelyellow]{python}
    if n < 1: return 0
    if n < 2: return 1

    return fib(n-1) + fib(n-2)
\end{minted}
\end{subfigure}
\begin{subfigure}[t]{0.34\textwidth}
\begin{minted}[bgcolor=pastelyellow]{python}
def find_position_of_value(arr, x):
\end{minted}
\vspace{-22pt}
\begin{minted}[bgcolor=LightGray]
{python}
    low, mid = 0, 0
    high = len(arr) - 1

\end{minted}
\vspace{-22pt}
\begin{minted}[bgcolor=pastelyellow]{python}
    while high >= low:
\end{minted}
\vspace{-22pt}
\begin{minted}[bgcolor=LightGray]
{python}
        mid = (high + low) // 2

        # If x is greater
        if arr[mid] < x:
            low = mid + 1
        # If x is smaller
\end{minted}
\vspace{-22pt}
\begin{minted}[bgcolor=pastelyellow]{python}
        elif arr[mid] > x:
\end{minted}
\vspace{-22pt}
\begin{minted}[bgcolor=LightGray]
{python}
            high = mid - 1
        else:
            return mid
    return -1
\end{minted}
\end{subfigure}
\begin{subfigure}[t]{0.36\textwidth}
\begin{minted}[bgcolor=LightGray]
{python}
def binary_search(arr, x):
\end{minted}
\vspace{-22pt}
\begin{minted}[bgcolor=pastelyellow]{python}
    start = 0
    end = len(arr)-1
    
    # While performing binary search
    while start <= end:
        mid = (start + end) // 2
\end{minted}
\vspace{-23.5pt}
\begin{minted}[bgcolor=LightGray]
{python}
        # If x is greater
\end{minted}
\vspace{-25pt}
\begin{minted}[bgcolor=pastelyellow]{python}
        if arr[mid] < x:
            start = mid + 1
\end{minted}
\vspace{-22pt}
\begin{minted}[bgcolor=LightGray]
{python}
        # If x is smaller
\end{minted}
\vspace{-22pt}
\begin{minted}[bgcolor=pastelyellow]{python}
        elif arr[mid] > x:
            end = mid - 1
\end{minted}
\vspace{-22pt}
\begin{minted}[bgcolor=LightGray]
{python}
        else:
\end{minted}
\vspace{-22pt}
\begin{minted}[bgcolor=pastelyellow]{python}
            return mid
    return -1
\end{minted} 
\end{subfigure}
\caption{\textbf{Code generation examples using Discrete Flow Matching}. Code condition is marked in \colorbox{LightGray}{gray}, model generation is marked in \colorbox{pastelyellow}{yellow}. Left sub-figure presents the standard left-to-right prompting; Middle and Right sub-figures, presents complex infilling setup.}
\label{fig:code_teaser}
\end{figure}

\section{Discrete Flow Matching}

\subsection{Setup and notations} 
In discrete sequence modeling, we denote a sequence $x$ as an array of $N$ elements $(x^1,x^2,\ldots,x^N)$. Each element, or \textit{token}, within this sequence is selected from a vocabulary of size $d$. Consequently, the entire set of possible sequences is given by $\mathcal{D}=[d]^N$, where $[d]=\set{1,\ldots,d}$. A random variable taking values in the space $\mathcal{D}$ is denoted by $X$ and its corresponding probability mass function (PMF) is $P(X=x)$. For simplicity, throughout the paper, we sometimes omit the random variable $X$ and use $p(x)$ to denote the PMF. 

To describe marginalization properties, we denote $p(x^i)$ the $x^i$ marginal of $p$, \ie, $p(x^i) = \sum_{x^{\bar{i}}} p(x)$, where $x^{\bar{i}}=(\ldots,x^{i-1},x^{i+1},\ldots)\in[d]^{N-1}$ are all the arguments excluding $i$. Similarly, $p(x^{\bar{i}})=\sum_{x^i} p(x)$, and $x^i\in [d]$. A useful PMF is the delta function, $\delta_y$, $y\in \gD$, which is defined by
\begin{equation}
    \delta_y(x) = \prod_{i=1}^N \delta_{y^i}(x^i), \text{ where }\delta_{y^i}(x^i) = \begin{cases}
        1 & x^i=y^i \\
        0 & x^i \ne y^i
    \end{cases}.
\end{equation}
With the marginal notation $\delta_y(x^i)=\delta_{y^i}(x^i)$ and $\delta_y(x^{\bar{i}})=\delta_{y^{\bar{i}}}(x^{\bar{i}}) = \prod_{j \neq i} \delta_{y^j}(x^j)$ which simplifies notation. 

\subsection{Source and target distributions} 

In discrete generative models our goal is to transform source samples $X_0 \sim p$ to target samples $X_1\sim q$. Our training data, consist of pairs $X_0$ and $X_1$ that are sampled from a joint distribution $\pi(x, y)$, satisfying the marginals constraints $p(x) = \sum_{y\in \gD} \pi(x,y), q(y) = \sum_{x\in \gD} \pi(x,y)$, i.e.,
\begin{equation}
    (X_0,X_1) \sim \pi(X_0,X_1).
\end{equation}
In the simplest case, the training pairs $X_0$ and $X_1$ are sampled independently from the source and target distributions respectively,
\begin{equation}
    (X_0,X_1)\sim p(X_0)q(X_1).
\end{equation}
\textbf{\emph{Example:} source and couplings.} 
Common instantiations of source distribution $p$ are: (i) adding a special token value often referred to as a `mask' or `dummy' token, denoted here by $\dummy$, and setting the source distribution to be all-mask sequences, \ie, $p(x)=\delta_{\dummy}(x)$; and (ii) using uniform distribution over $\gD$, which is equivalent to drawing each $x^i$ independently to be some value in $[d]$ with equal probability, denoted $p(x)=p_{\tiny\text{u}}(x)$. In this paper we focus mainly on (i).  We further consider two choices of couplings $\pi$: Independent coupling, which we call unconditional coupling (U-coupling), $\pi(x_0,x_1)=p(x_0)q(x_1)$. A random sample that realizes this choice have the form
\begin{equation}\label{e:unconditional}
    (X_0,X_1) = \Big((\dummy,\ldots,\dummy),X_1\Big),
\end{equation}
where $X_1\sim q(X_1)$ is a random sample from the training set. The second choice of coupling $\pi(x_0,x_1)=p(x_0|x_1)q(x_1)$, which we find improves conditional sampling, partially masks inputs with samples of the form
\begin{equation}\label{e:conditioning}
    (X_0,X_1) = (\sI \odot X_1  + (\one-\sI)\odot (\dummy,\ldots,\dummy) , X_1),
\end{equation}
where $X_1\sim q(X_1)$ and $\sI \in \set{0,1}^N$ is a random variable indicating the conditioning, $\odot$ denotes the entry-wise product, and $\one\in\Real^N$ is the vector of all ones. We call this conditional coupling (C-coupling).

\subsection{Probability paths} We follow the Flow Matching approach~\citep{lipman2022flow,liu2022flow,albergo2022building} that uses a predefined \emph{probability path} $p_t$ interpolating $p$ and $q$, \ie,
\begin{equation}\label{e:boundary}
    p_0 = p \quad  \text{ and } \quad  \ p_1=q
\end{equation}
to train the generative model taking a source sample $X_0\sim p$ to a target sample $X_1\sim q$. We use arbitrary coupling of source and target~\citep{pooladian2023multisample,tong2023improving}, $\pi(x_0,x_1)$, and the symmetric Flow Matching path ~\citep{albergo2022building} to define the marginal probability path,
\begin{equation}\label{e:p_t}
    p_t(x) = \sum_{x_0,x_1\in \gD} p_t(x|x_0,x_1)\pi(x_0,x_1),\text{ where   }p_t(x|x_0,x_1) = \prod_{i=1}^N p_t(x^i | x_0,x_1),
\end{equation}
and $p_t(x^i|x_0,x_1)$ is a time-dependent  probability on the space of tokens $[d]$ conditioned on the pair $x_0,x_1$, and satisfying $p_0(x^i|x_0,x_1)=\delta_{x_0}(x^i)$ and $p_1(x^i|x_0,x_1)=\delta_{x_1}(x^i)$. If the conditional path $p_t(x^i|x_0,x_1)$ satisfies these boundary conditions then the marginal path $p_t(x)$ satisfies \eqref{e:boundary}. 

In developing the framework, we would like to consider as general as possible set of probability paths that are also tractable to learn within the Flow Matching framework. We consider conditional probability paths as a convex sum of $m$ conditional probabilities  $w^j(x^i|x_0,x_1)$, \ie, 
\begin{equation}\label{e:p_t_cond_general}
    p_t(x^i|x_0,x_1) = \sum_{j=1}^m \kappa^{i,j}_t w^j(x^i|x_0,x_1), 
\end{equation} 
where $\sum_j \kappa^{i,j}_t=1$ and $\kappa^{i,j}_t\geq 0$ are collectively called the \emph{scheduler}. Note that the scheduler can be defined independently for each location in the sequence $i\in[N]$ or uniformly for all tokens, $\kappa^{i,j}_t=\kappa^j_t$. 

A simple yet useful instance of these conditional paths is reminiscent of the continuous Flow Matching paths formulated as  convex interpolants,  \begin{align}\label{e:p_t_cond} p_t(x^i|x_0,x_1) = (1-\kappa_t)\delta_{x_0}(x^i) + \kappa_t \delta_{x_1}(x^i),    
\end{align}
 where the scheduler $\kappa_t$ satisfies $\kappa_0=0$,  $\kappa_1=1$, and monotonically increasing in $t$. Another interesting instantiation of \eqref{e:p_t_cond_general} is adding uniform noise with some probability depending on $t$, 
\begin{equation}\label{e:p_t_3_convex}
    p_t(x^i|x_0,x_1) = \kappa^1_t\delta_{x_1}(x^i) + \kappa^2_t p_{\text{\tiny u}}(x^i) + \kappa^3_t \delta_{x_0}(x^i),
\end{equation}
where $\kappa^1_0=0$, $\kappa^1_1=1$,  $\kappa^2_0=\kappa^2_1=0$ (remembering that $\sum_j \kappa_t^{i,j}=1$ and $\kappa^{i,j}_t\geq 0$).

\begin{figure}
    \begin{tabular}{c@{\hspace{0pt}}c@{\hspace{0pt}}c@{\hspace{0pt}}c}  \includegraphics[width=0.24\textwidth]{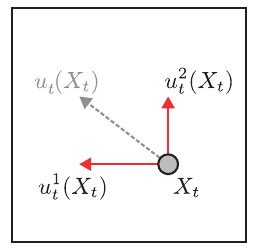}
    & \includegraphics[width=0.24\textwidth]{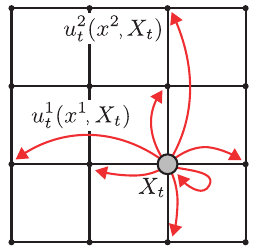} &    \includegraphics[width=0.24\textwidth]{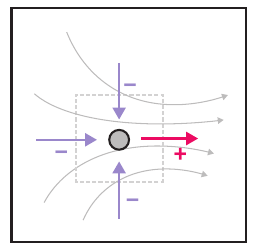}
    & \includegraphics[width=0.24\textwidth]{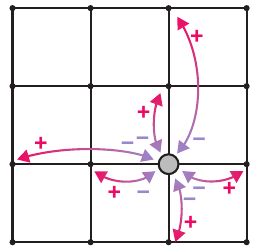} \vspace{-3pt} \\
      Flow $\Real^N$   &  Flow $[d]^N$ & $\divv$ in $\Real^N$ & $\divv$ in $[d]^N$ \vspace{-5pt}
    \end{tabular}
    \caption{Discrete flow in $\gD=[d]^N$ with $d=4,N=2$ (middle-left) versus continuous flow in $\Real^N$, $N=2$ (left). The rate of change of probability of a state (gray disk) is given by the divergence operator shown in the   continuous case (middle right) and the discrete case (right).\vspace{-10pt}}\label{fig:text_setup}
\end{figure}

\subsection{Generating Probability Velocities}\label{s:generating}

\textbf{Continuous generating velocity.} Sampling in continuous Flow Matching is performed by updating the current (continuous) sample $X_t\in\Real^N$, $t\in[0,1)$, according to a learned \emph{generating velocity field} $u_t^i(X_t)$, $i\in [N]$. Euler sampling follows the (deterministic) rule
\begin{equation}\label{e:continuous_sampling}
    X_{t+h}^i = X_t^i + h u_t^i(X_t),
\end{equation} 
where $h>0$ is a user-defined time step. Note that \eqref{e:continuous_sampling} is updating separately each of the sample coordinates, $X_t^i$, $i\in [N]$, see \eg,~\Cref{fig:text_setup}, left. The velocity  $u_t^i(X_t)$ can be either directly modeled with a neural network, or parameterized via the \emph{denoiser} (a.k.a.~$x$-prediction) or \emph{noise-prediction} (a.k.a.~$\eps$-prediction), see left column in~\Cref{tab:discrete_and_continuous}. If, for all $t\in[0,1)$, starting at $X_t\sim p_t$ and sampling with \eqref{e:continuous_sampling} provides $X_{t+h}\sim p_{t+h} + o(h)$\footnote{The $o(h^\ell)$ notation means a function going to zero faster than $h^\ell$ as $h\too 0$, \ie, $\frac{o(h^\ell)}{h^\ell}\overset{h\too 0}{\longrightarrow} 0$.} then we say that $u_t$ \emph{generates} $p_t$.\vspace{-5pt} 

\paragraph{\textbf{Generating probability velocity.}} For defining Flow Matching in the discrete setting, we follow \citet{campbell2024generative} and consider a Continuous-Time discrete Markov Chain (CTMC) paradigm, namely the sample $X_t$ is jumping between states in $\gD$, depending on a continuous time value $t\in[0,1]$. Similar to the continuous Flow Matching setting described above, we focus on a model that predicts the rate of probability change of the current sample $X_t$ in each of its $N$ tokens, see~\Cref{fig:text_setup}, middle-left. Then, each token of the sample $X_t\sim p_t$ is updated independently by 
\begin{equation}\label{e:discrete_sampling}
    X^i_{t+h} \sim \delta_{X^i_t}(\cdot) + h u_t^i(\cdot,X_t), 
\end{equation}
where we call $u_t$ the \emph{probability velocity} as reminiscent of the velocity field in continuous Flow Matching, and as in the continuous case, we define:
\begin{definition}\label{def:generate}
    Probability velocity $u_t$ \emph{generates} the probability path $p_t$ if, for all $t\in [0,1)$ and given a sample $X_t\sim p_t$, the sample $X_{t+h}$ defined in \eqref{e:discrete_sampling} satisfies $X_{t+h} \sim p_{t+h} + o(h)$. 
\end{definition}

Algorithm \ref{alg:sample} formulates a basic sampling algorithm given a generating probability velocity $u_t$. In order for the r.h.s.~of \eqref{e:discrete_sampling} to define a proper PMF for sufficiently small $h>0$, it is necessary and sufficient that the probability velocity satisfies the conditions \vspace{-3pt}
\begin{equation}\label{e:rate_conds}
    \sum_{x^i\in [d]}u_t^i(x^i,z)=0, \text{ and } u_t^i(x^i,z)\geq 0 \text{ for all } i\in[N] \text{ and } x^i\ne z^i. \vspace{-3pt}
\end{equation}
\begin{wrapfigure}[6]{h!}{0.5\textwidth}
\vspace{-20pt}
\resizebox{0.5\textwidth}{!}{
    \begin{minipage}{0.6\textwidth}
     \begin{algorithm}[H]
    \caption{Flow Matching sampling.}\label{alg:sample}
    \begin{algorithmic}
        \Require velocity $u_t$, sample $X\sim p$, step size $h=\frac{1}{n}$        
        \For{$t = 0,h,2h,\ldots,1-h$}
        \State $X^i\sim \delta_{X^i}(\cdot) + hu^i_{t}(\cdot,X)$, for $i\in[N]$  \textcolor{ForestGreen}{\Comment {eq.~\ref{e:u_t_denoiser} or \ref{e:u_t_general}}}
        \EndFor
        \State \Return $X$        
    \end{algorithmic}
  \end{algorithm} 
    \end{minipage}    }    
\end{wrapfigure}
Now the main question is how to find a probability velocity $u_t$ that generates the probability path defined in equations \ref{e:p_t} and \ref{e:p_t_cond_general}? 
A key insight in Flow Matching~\citep{lipman2022flow} is that $u_t$ can be constructed as a marginalization of \emph{conditional} probability velocities, $u_t^i(x^i,z|x_0,x_1)$, generating the corresponding conditional probability paths $p_t(x^i|x_0,x_1)$. 
This can also be shown to hold in the discrete CTMC setting~\citep{campbell2024generative}, where a reformulation in our context and notation is as follows.
\begin{theorem}\label{thm:cond_to_marginal}
    Given a conditional probability velocity  $u_t^i(x^i,z|x_0,x_1)$ generating a conditional probability path $p_t(x|x_0,x_1)$, the marginal velocity defined by     \begin{equation}\label{e:u_t_marginal}
        u_t^i(x^i,z) = \sum_{x_0,x_1\in \gD} u_t^i(x^i,z|x_0,x_1)p_t(x_0,x_1|z)
    \end{equation} 
    generates the marginal probability path $p_t(x)$, where by Bayes' rule     \begin{equation}\label{e:p_t_x0_x1_given_Xt}        p_t(x_0,x_1|z)=\frac{p_t(z|x_0,x_1)\pi(x_0,x_1)}{p_t(z)}.
    \end{equation}    
\end{theorem}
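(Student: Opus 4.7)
The plan is to reduce Definition \ref{def:generate} to an infinitesimal "continuity equation" for $p_t$, verify this equation for the claimed marginal velocity by averaging the corresponding conditional equation against $\pi(x_0,x_1)$, and then apply Bayes' rule \eqref{e:p_t_x0_x1_given_Xt} to identify the result with the definition \eqref{e:u_t_marginal}.

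First I would translate "generating" into a linear equation. By independence of the per-token updates in \eqref{e:discrete_sampling}, the induced transition kernel expands as
$$P(X_{t+h}=y\mid X_t=z) \;=\; \prod_{i=1}^N \bigl[\delta_{z^i}(y^i) + h\, u_t^i(y^i,z)\bigr] \;=\; \delta_z(y) + h\sum_{i=1}^N u_t^i(y^i,z)\,\delta_z(y^{\bar{i}}) + O(h^2),$$
since any product of two or more $h$ factors is $O(h^2)$. Multiplying by $p_t(z)$, summing over $z$, invoking Definition \ref{def:generate}, and dividing by $h$ as $h\to 0$ yields the equivalent continuity equation
$$\frac{d}{dt}p_t(y) \;=\; \sum_{i=1}^N\sum_{z^i\in[d]} u_t^i\!\bigl(y^i,(y^{\bar{i}},z^i)\bigr)\,p_t\bigl((y^{\bar{i}},z^i)\bigr),$$
where $(y^{\bar{i}},z^i)$ denotes the state that agrees with $y$ off coordinate $i$ and takes value $z^i$ on coordinate $i$. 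Establishing this equivalence between "generating" and the discrete continuity equation is the main technical step.

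By hypothesis the same continuity equation holds for every conditional path $p_t(\cdot\mid x_0,x_1)$ with velocity $u_t^i(\cdot,\cdot\mid x_0,x_1)$. I would multiply that equation by $\pi(x_0,x_1)$, sum over $(x_0,x_1)$, and pull the time derivative out of the finite sum; the left-hand side becomes $\frac{d}{dt}p_t(y)$ by \eqref{e:p_t}. On the right-hand side I apply Bayes' rule \eqref{e:p_t_x0_x1_given_Xt} in the form $p_t((y^{\bar{i}},z^i)\mid x_0,x_1)\,\pi(x_0,x_1) = p_t(x_0,x_1\mid(y^{\bar{i}},z^i))\,p_t((y^{\bar{i}},z^i))$ and swap the order of summation so that the inner sum over $(x_0,x_1)$ reproduces the marginal velocity \eqref{e:u_t_marginal}. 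The resulting identity is the continuity equation for $p_t$ with velocity $u_t^i(x^i,z)$, which by the equivalence above means $u_t$ generates $p_t$.

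Finally I would verify the rate conditions \eqref{e:rate_conds}: since \eqref{e:u_t_marginal} expresses $u_t^i(x^i,z)$ as a convex combination of conditional velocities with nonnegative weights $p_t(x_0,x_1\mid z)$ summing to $1$, both the zero-sum constraint $\sum_{x^i}u_t^i(x^i,z)=0$ and the nonnegativity $u_t^i(x^i,z)\geq 0$ for $x^i\neq z^i$ transfer from the conditionals to the marginal. The main obstacle I anticipate is purely notational bookkeeping: correctly recognizing that multi-site changes contribute only $O(h^2)$ in the product expansion, and keeping the "single-site change" shorthand $(y^{\bar{i}},z^i)$ unambiguous throughout; once these are in place, the proof collapses to Bayes' rule together with an interchange of finite summations.
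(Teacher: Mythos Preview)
Your proposal is correct and follows essentially the same route as the paper: reduce ``generating'' to the discrete Continuity Equation, apply it to each conditional path, average against $\pi(x_0,x_1)$, and use Bayes' rule to recognize the marginal velocity, then transfer the rate conditions \eqref{e:rate_conds} by convexity. The only organizational difference is that the paper derives the equivalence with the Continuity Equation once in the main text (\eqref{e:continuity_equation_derivation}) and simply invokes it in the proof, whereas you re-derive it inline; your explicit single-site notation $(y^{\bar{i}},z^i)$ is equivalent to the paper's $\delta_z(x^{\bar{i}})$ bookkeeping.
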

For completeness we provide a simple proof of this theorem in~\Cref{a:conditional_to_marginal}. The proof, similar to the continuous Flow Matching case, shows that $u_t$ and $p_t$ satisfy the (discrete version of the) Continuity Equation.
\paragraph{\textbf{The Continuity Equation.}} To provide the mathematical tool for showing that a probability velocity $u_t$ does indeed generate the probability path $p_t$, and also to further highlight the similarities to the continuous case, we next formulate the \emph{Kolmogorov Equations}, which describe the state probability rate $\dot{p}_t(x)$, $x\in \gD$, in CTMC as a Continuity Equation (CE). The Continuity Equation, similarly to Kolmogorov Equations, describes $\dot{p}_t(x)$, $x\in\Real^N$ in the \emph{continuous case}, and is formulated as the Partial Differential Equation (PDE)
\begin{equation}\label{e:ce}
    \dot{p}_t(x) + \divv_x(p_t u_t) = 0,
\end{equation}
where the divergence operator $\divv_x(v)$ applied to a vector field $v:\Real^N\too\Real^N$ is defined by  
\begin{equation}
    \divv_x(v) = \sum_{i=1}^N \partial_{x^i}v^i(x), 
\end{equation}
and intuitively means the total flux leaving $x$, see~\Cref{fig:text_setup} (middle-right). This gives an intuitive explanation to the Continuity Equation: the rate of the probability $\dot{p}_t(x)$ of a state $x\in\Real^N$ equals the total \emph{incoming probability flux}, $p_t u_t$, at $x$. In the discrete case (CTMC) the Continuity Equation (\eqref{e:ce}) holds as is, once the discrete divergence operator is properly defined, \ie, to measure the outgoing flux from a discrete state. In more detail, given some vector field, which in the discrete case is a scalar-valued function over pairs of states, $v:\gD\times\gD \too \Real$, the discrete divergence is 
\begin{equation}\label{e:discrete_div}
    \divv_x (v) = \sum_{z\in \gD} \brac{v(z,x)-v(x,z)},
\end{equation}
where $v(z,x)$ represents the flux $x\too z$ and $v(x,z)$ represent the opposite flux $z\too x$; see~\Cref{fig:text_setup}, right. Now, in our case (see~\Cref{fig:text_setup}, middle-left), the probability flux at a state $x\in \gD$ involves all sequences with at most one token difference from $x$, \ie, the probability flux $p_t u_t$ at $x$ takes the form $v(x,z)=p_t(z)u_t^i(x^i,z)$ and $v(z,x)=p_t(x)u_t^i(z^i,x)$ for $z$ and $x$ that differ only in the $i$-th token, $v(x,x)=p_t(x)\sum_{i=1}^N u_t^i(x^i,x)$, and $v(x,z)=0$ for all other $(z,x)\in \gD\times \gD$. A direct calculation now shows (see~\Cref{a:div}):
\begin{equation}\label{e:div_explicit}
    \divv_x(p_t u_t) = -\sum_{z\in \gD} p_t(z) \brac{\sum_{i=1}^N \delta_{z}(x^{\bar{i}}) u_t^i(x^i,z)}.
\end{equation}
Checking that a probability velocity $u_t$ generates a probability path $p_t$ (in the sense of Definition \ref{def:generate}) amounts to verifying the Continuity Equation (\eqref{e:ce}). Indeed, using arguments from \citet{campbell2024generative} and the discrete divergence operator, the PMF of $X_{t+h}$ defined by sampling according to \eqref{e:discrete_sampling} is 
\begin{equation}\label{e:continuity_equation_derivation} 
\begin{aligned}
&\E_{X_t}\prod_{i=1}^N\brac{\delta_{X_t}(x^i)+hu_t^i(x^i,X_t)} = \E_{X_t} \brac{ \delta_{X_t}(x) + h \sum_{i=1}^N \delta_{X_t}(x^{\bar{i}}) u^i_t(x^i,X_t)}+ o(h)\\
& \qquad \quad = p_t(x) - h\divv_x(p_t u_t) + o(h) \textcolor{red}{\overset{(\ref{e:ce})}{=}} p_t(x)+h\dot{p}_t(x)+o(h) = p_{t+h}(x) + o(h),
\end{aligned}
\end{equation}
where we assume $X_t\sim p_t$, the first equality uses the identity $\prod_{i}\brac{a^i+h b^i} = \prod_{i}a^i + h\sum_i (\prod_{j\ne i}a^{j}) b^i + o(h)$, the second equality uses ~\eqref{e:div_explicit}, and the previous-to-last equality uses the Continuity Equation (\eqref{e:ce}). This shows that if the Continuity Equation holds then $u_t$ generates $p_t$ in the sense of Definition \ref{def:generate}.

\textbf{Conditional and marginal generating velocities.} We provide the probability velocities generating the conditional probability paths $p_t(x|x_0,x_1)$ defined in equations \ref{e:p_t} and \ref{e:p_t_cond_general}. Then, using the marginalization formula in \eqref{e:u_t_marginal} we end up with a closed-form marginal velocity for the probability paths $p_t(x)$. In~\Cref{a:pv_generated_conditional_pp} we show 
\begin{theorem}[Probability velocity of conditional paths]\label{thm:pvf_of_p_t_cond}
A generating probability velocity for the conditional paths $p_t(x|x_0,x_1)$ defined in equations \ref{e:p_t} and \ref{e:p_t_cond_general} is 
\begin{equation}\label{e:u_t_cond}
    u_t^i(x^i,z|x_0,x_1) =  \sum_{j=1}^m a_t^{i,j} w^j(x^i|x_0,x_1) + b_t^{i} \delta_{z}(x^i), 
\end{equation}
with $a_t^{i,j}=\dot{\kappa}_t^{i,j} - \kappa_t^{i,j}\dot{\kappa}_t^{i,\ell}/\kappa_t^{i,\ell}$, and $b_t^{i}=\dot{\kappa}_t^{i,\ell}/\kappa_t^{i,\ell}$ where $\ell=\argmin_{j\in [m]} \brac{\dot{\kappa}_t^{i,j}/\kappa_t^{i,j}}$.
\end{theorem}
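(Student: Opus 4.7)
The overall plan is to verify the discrete Continuity Equation~(\ref{e:ce}) for the proposed conditional velocity against the conditional path, and then separately check the two rate conditions of~(\ref{e:rate_conds}). I would fix the pair $(x_0,x_1)$ throughout and use the shorthand $P_t(x^i) := p_t(x^i|x_0,x_1)$ and $U_t^i(x^i,z^i) := u_t^i(x^i,z|x_0,x_1)$; the latter genuinely depends on $z$ only through $z^i$, since in the proposed formula the $z$-dependence enters only via $\delta_z(x^i) = \delta_{z^i}(x^i)$.

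The first step is to exploit the factorization $p_t(x|x_0,x_1) = \prod_k P_t(x^k)$ in order to reduce the multi-token CE to a single-token identity. Plugging this factorization into the divergence formula~(\ref{e:div_explicit}) and using $\delta_z(x^{\bar i}) = \prod_{k \neq i} \delta_{z^k}(x^k)$, every sum over $z^k$ with $k \neq i$ collapses via $\sum_{z^k} P_t(z^k)\delta_{z^k}(x^k) = P_t(x^k)$, leaving
\begin{equation*}
\divv_x(p_t u_t) = -\sum_{i=1}^N \Big(\prod_{k\neq i} P_t(x^k)\Big) \sum_{z^i} P_t(z^i)\, U_t^i(x^i,z^i).
\end{equation*}
Combined with the product-rule expansion of $\dot p_t(x|x_0,x_1)$, the CE collapses to the per-token identity $\dot P_t(x^i) = \sum_{z^i} P_t(z^i)\, U_t^i(x^i,z^i)$. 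Substituting the proposed $U_t^i$ and using $\sum_{z^i} P_t(z^i) = 1$ together with $\sum_{z^i} P_t(z^i)\delta_{z^i}(x^i) = P_t(x^i)$, this becomes a linear identity in the $w^j(x^i|x_0,x_1)$ whose term-by-term version is $\dot\kappa_t^{i,j} = a_t^{i,j} + b_t^i \kappa_t^{i,j}$, which is exactly how $a_t^{i,j}$ and $b_t^i$ are defined in the statement.

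It then remains to check the rate conditions. The sum-to-zero condition reduces, via $\sum_{x^i} w^j(x^i|x_0,x_1) = 1 = \sum_{x^i}\delta_{z^i}(x^i)$, to $\sum_j \dot\kappa_t^{i,j} = 0$, which is immediate from differentiating the scheduler constraint $\sum_j \kappa_t^{i,j} = 1$. For non-negativity at $x^i \neq z^i$ the self-loop term vanishes and it suffices to show $a_t^{i,j} \geq 0$ for every $j$; rewriting $a_t^{i,j} = \kappa_t^{i,j}\big(\dot\kappa_t^{i,j}/\kappa_t^{i,j} - \dot\kappa_t^{i,\ell}/\kappa_t^{i,\ell}\big)$ then makes this an immediate consequence of the $\argmin$ choice of $\ell$. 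I expect this last step to be the main subtlety of the proof: the naive choice $a_t^{i,j} = \dot\kappa_t^{i,j}$, $b_t^i = 0$ trivially satisfies the CE identity but can violate non-negativity whenever some $\dot\kappa_t^{i,j}$ is negative; the self-loop term $b_t^i \delta_{z^i}(x^i)$ is harmless for the CE (its contributions to $v(x,z)$ cancel the ones to $v(z,x)$ on the diagonal) and shifts every $a_t^{i,j}$ by $-b_t^i \kappa_t^{i,j}$, so picking $\ell$ to minimize $\dot\kappa_t^{i,j}/\kappa_t^{i,j}$ produces the tightest uniform shift that simultaneously rescues non-negativity across all $j$.
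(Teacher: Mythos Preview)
Your proposal is correct and follows essentially the same approach as the paper: exploit the product structure $p_t(x|x_0,x_1)=\prod_i p_t(x^i|x_0,x_1)$ to reduce the Continuity Equation to a per-coordinate identity, verify that identity via the coefficient relation $\dot\kappa_t^{i,j}=a_t^{i,j}+b_t^{i}\kappa_t^{i,j}$, and use the $\argmin$ choice of $\ell$ to secure non-negativity. The only cosmetic difference is that the paper runs the CE verification in the opposite direction (it starts from $\dot p_t$, substitutes $w^\ell=\frac{1}{\kappa_t^{i,\ell}}\big[p_t(x^i|x_0,x_1)-\sum_{j\ne\ell}\kappa_t^{i,j}w^j\big]$, and then reinserts sums over $z$ to recognize $-\divv_x$), whereas you first collapse the divergence and then match coefficients; the mathematical content is the same.
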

Now, computing the marginal probability velocity using \eqref{e:u_t_marginal} applied to the conditional probability velocity in \eqref{e:u_t_cond} gives
\begin{equation}\label{e:u_t_general}
u^i_t(x^i,z) = \sum_{j=1}^m a^{i,j}_t \hat{w}^j_t(x^i,z) + b^{i,j}_t\delta_{z}(x^i), 
\end{equation} 
where the posteriors $\hat{w}^j_t$ of $w^j$ (that are later shown to be tractable to learn) are defined by 
\begin{equation}\label{e:posterior_w}
\hat{w}_t^j(x^i , z) = \sum_{x_0,x_1\in \gD} w^j(x^i|x_0,x_1) p_t(x_0,x_1|z), 
\end{equation}
where $p_t(x_0,x_1|z)$ (defined in \eqref{e:p_t_x0_x1_given_Xt}) is the posterior probability of $x_0,x_1$ conditioned on the current state $X_t=z$.
A useful instantiation of the general velocity in \eqref{e:u_t_general} is when considering the path family in \eqref{e:p_t_cond}, for which $w^1(x^i|x_0,x_1)=\delta_{x_1}(x^i)$, $w^2(x^i|x_0,x_1)=\delta_{x_0}(x^i)$, $\kappa_t^{i,1}=\kappa_t$, $\kappa_t^{i,2}=1-\kappa_t$, $\dot{\kappa}_t\geq0$ (\ie, monotonically non-decreasing in $t$) and in this case \eqref{e:u_t_general} reads as 
\begin{center}			%
    \colorbox{mygray} {		%
      \begin{minipage}{0.977\linewidth} 	%
       \centering
       \vspace{-0.7em}   \begin{equation}\label{e:u_t_denoiser}
    u^i_t(x^i,z) = \frac{\dot{\kappa}_t}{1-\kappa_t}\brac{p_{1|t}(x^i|z) - \delta_{z}(x^i)}
\end{equation}        
      \end{minipage}}			%
\end{center}
where we use the notation $p_{1|t}(x^i|z) = \sum_{x_0,x_1} \delta_{x_1}(x^i)p_t(x_0,x_1|z)$ for the \emph{probability denoiser}.

\begin{table}[t]
    \centering   
    \resizebox{\textwidth}{!}{
    \begin{tabular}{lcc}
    \toprule 
     & \textbf{Continuous} Flow Matching & \textbf{Discrete} Flow Matching \\ \midrule  
    Marginal prob.  & 
     \multicolumn{2}{c}{$p_t(x)=\sum_{x_0,x_1} \prod_{i=1}^N p_t(x^i|x_0,x_1)\pi(x_0,x_1)$}  \\[5pt]
    Conditional prob. & $p_t(x^i|x_0,x_1)= \delta_{\kappa_t x_1 + (1-\kappa_t)x_0}(x^i)$ & $p_t(x^i|x_0,x_1)=\kappa_t \delta_{x_1}(x^i) + (1-\kappa_t)\delta_{x_0}(x^i)$ \\ \midrule
    VF-\emph{Denoiser}   & $u^i_t(X_t)=\frac{\dot{\kappa}_t}{1-\kappa_t}\brac{\textcolor{blue}{{\hat{x}}^i_{1|t}(X_t)}-X_t^i}$ & $u^i_t(x^i,X_t) = \frac{\dot{\kappa}_t}{1-\kappa_t}\brac{\textcolor{blue}{p_{1|t}(x^i|X_t)}-\delta_{X_t}(x^i)}$ \\[5pt]
    VF-\emph{Noise-pred}   & $u^i_t(X_t)=\frac{\dot{\kappa}_t}{\kappa_t}\brac{X^i_t-\textcolor{red}{{\hat{x}}^i_{0|t}(X_t)}}$ & $u^i_t(x^i,X_t) = \frac{\dot{\kappa}_t}{\kappa_t}\brac{\delta_{X_t}(x^i)-\textcolor{red}{p_{0|t}(x^i|X_t)}}$ \\ \bottomrule
    \end{tabular} }
    \caption{Generating (marginal) velocity fields have identical form for the continuous and discrete Flow Matching when using denoiser/noise-prediction parameterization; $\textcolor{blue}{\hat{x}_{1|t}(z)} = \E_{X_1\sim p_t(\cdot|z)} X_1$ is the standard continuous denoiser (a.k.a.~$x$-prediction) and $\textcolor{red}{\hat{x}_{0|t}(z)} = \E_{X_0\sim p_t(\cdot|z)} X_0$ is the standard noise-prediction (a.k.a.~$\epsilon$-prediction). }   \label{tab:discrete_and_continuous}
\end{table}

\paragraph{\textbf{Sampling backward in time.}} We can also sample \emph{backwards in time} by following the sampling rule  $X_{t-h}^i\sim \delta_{X_t^i}(\cdot)-hu_t^i(\cdot,X_t)$. In this case $-u_t^i(x^i,z)$ should satisfy \eqref{e:rate_conds}. A (backward-time) generating probability velocity can then be achieved from \eqref{e:u_t_general} with the simple change to the coefficients $a^{i,j}_t$ and $b^{i,j}_t$, see~\Cref{a:backward_time_generating}. For $p_t$ defined with \eqref{e:p_t_cond} the generating velocity is
\begin{center}			%
    \colorbox{mygray} {		%
      \begin{minipage}{0.977\linewidth} 	%
       \centering
       \vspace{-0.7em}   
\begin{equation}\label{e:u_t_noise}
    u^i_t(x^i,z) = \frac{\dot{\kappa}_t}{\kappa_t}\brac{\delta_{z}(x^i) - p_{0|t}(x^i|z)},    
\end{equation}      
      \end{minipage}}			%
\end{center}
where in this case $p_{0|t}(x^i|z)=\sum_{x_0, x_1 \in \gD}\delta_{x_0}(x^i)p_t(x_0,x_1|z)$ is the \emph{probability noise-prediction}.

Remarkably, the generating velocity fields in \ref{e:u_t_denoiser} and \ref{e:u_t_noise} take the \emph{exact same form} as the generating (a.k.a.~marginal) velocity fields in \emph{continuous} flow matching when parameterized via the denoiser or noise-prediction parameterizations and using the same schedulers, see~\Cref{tab:discrete_and_continuous} and~\Cref{a:continuous_fm} for explanation of the continuous case. In~\Cref{a:backward_time_generating} we provide the backward-time version of Theorem \ref{thm:pvf_of_p_t_cond}.

\paragraph{\textbf{Corrector sampling.}} Combining the forward-time $\hat{u}_t$ (~\eqref{e:u_t_denoiser}) and backward-time $\check{u}_t$ (~\eqref{e:u_t_noise}), \ie,
\begin{equation}\label{e:u_t_corrector}
    \bar{u}_t^i(x^i,z) = \alpha_t \hat{u}_t^i(x^i,z) - \beta_t \check{u}_t^i(x^i,z),
\end{equation}
provides a valid forward-time probability velocity field (\ie, satisfies \eqref{e:rate_conds}) for $t\in(0,1)$ as long as $\alpha_t,\beta_t>0$. This velocity field can be used for two types of corrector sampling: (i) When $\alpha_t-\beta_t=1$ sampling with $\bar{u}_t$ leads to \emph{corrector sampling} where intuitively each step moves \textcolor{red}{$1+\beta_t$} forward in time and \textcolor{red}{$-\beta_t$} backwards, which allows reintroducing noise into the sampling process; and (ii) when $\alpha_t-\beta_t=0$ sampling with $\bar{u}_t$ when fixing $t\in (0,1)$ leads to \emph{corrector iterations} where limit samples distribute according to $p_t$. In ~\Cref{a:corrector} we prove:
\begin{theorem}\label{thm:corrector}
For perfectly trained posteriors and $\alpha_t,\beta_t>0$, $t\in (0,1)$, $\bar{u}_t$ in \eqref{e:u_t_corrector} is a probability velocity, \ie, satisfies \eqref{e:rate_conds}, and: (i) For $\alpha_t-\beta_t = 1$, $\bar{u}_t$ provides a probability velocity generating $p_t$; (ii) For $\alpha_t-\beta_t=0$, repeatedly sampling with $\bar{u}_t$ at fixed $t\in(0,1)$ and sufficiently small $h$ is guaranteed to converge to a sample from $p_t$. 
\end{theorem}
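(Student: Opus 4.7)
The plan is to split the argument into three parts corresponding to the three claims, all of which reduce to the discrete Continuity Equation together with the divergence identity~\eqref{e:div_explicit}. First, for the rate-condition check: $\hat u_t$ is a forward-time generating probability velocity for $p_t$ by Theorem~\ref{thm:pvf_of_p_t_cond}, so it satisfies~\eqref{e:rate_conds}, and $-\check u_t$ satisfies~\eqref{e:rate_conds} by the discussion preceding~\eqref{e:u_t_noise}. Since $\alpha_t,\beta_t>0$, the combination $\bar u_t=\alpha_t\hat u_t+\beta_t(-\check u_t)$ inherits both properties: the zero-sum over $x^i$ by linearity, and off-diagonal nonnegativity as a positive combination of nonnegatives.

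For claim (i), I would use the fact that the derivation in~\eqref{e:continuity_equation_derivation} identifies ``$u_t$ generates $p_t$'' with the discrete Continuity Equation $\dot p_t+\divv_x(p_t u_t)=0$. The forward velocity $\hat u_t$ satisfies this by construction. Re-running the same derivation with $-h$ in place of $h$ (to match the backward sampling rule $X_{t-h}^i\sim\delta_{X_t^i}(\cdot)-h\check u_t^i(\cdot,X_t)$) gives $p_t(x)+h\,\divv_x(p_t\check u_t)+o(h)=p_{t-h}(x)+o(h)=p_t(x)-h\dot p_t(x)+o(h)$, so $\check u_t$ satisfies the \emph{same} Continuity Equation $\dot p_t+\divv_x(p_t\check u_t)=0$. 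By linearity of the discrete divergence in its vector-field argument,
\begin{equation*}
    \divv_x(p_t\bar u_t)=\alpha_t\divv_x(p_t\hat u_t)-\beta_t\divv_x(p_t\check u_t)=-(\alpha_t-\beta_t)\dot p_t,
\end{equation*}
which equals $-\dot p_t$ exactly when $\alpha_t-\beta_t=1$, proving (i).

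For claim (ii), the same identity at $\alpha_t-\beta_t=0$ yields $\divv_x(p_t\bar u_t)=0$; via~\eqref{e:div_explicit}, this is precisely the balance condition that $p_t$ is a stationary distribution of the CTMC with generator $\bar u_t$ at the frozen time $t$. The Euler-type update of Algorithm~\ref{alg:sample} then defines, for sufficiently small $h$, a valid discrete-time Markov kernel on the finite state space $\gD$, and one step maps $p_t$ to $p_t+o(h)$ by the computation in~\eqref{e:continuity_equation_derivation}. Irreducibility follows because $\hat u_t$ contributes strictly positive off-diagonal ``denoising'' rates while $-\check u_t$ contributes positive ``re-noising'' rates (e.g., re-masking any non-mask token in the all-mask source case), so any pair of states in the support of $p_t$ is connected by a finite sequence of $\bar u_t$-transitions; aperiodicity is immediate from the strictly positive self-loop $\delta_z(x^i)+h\bar u_t^i(z^i,z)>0$ for small $h$. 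Standard finite-state ergodic theory then concludes that iterating the chain converges to its unique stationary, which is $p_t$.

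The main obstacle is the irreducibility step in claim (ii): one has to verify that the combined forward/backward rates indeed connect all states in the support of $p_t$, which depends on the source law and on the supports of the perfectly trained posteriors $p_{0|t}$ and $p_{1|t}$. The Continuity-Equation bookkeeping for the rate conditions and for (i) is otherwise routine once the backward-time analog of~\eqref{e:continuity_equation_derivation} is written down.
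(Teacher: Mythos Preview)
Your proposal is correct and follows essentially the same route as the paper: verify~\eqref{e:rate_conds} from the fact that $\hat u_t$ and $-\check u_t$ each satisfy it, use linearity of the discrete divergence to obtain $\divv_x(p_t\bar u_t)=-(\alpha_t-\beta_t)\dot p_t$, and for (ii) invoke finite-state ergodicity (irreducibility via positive forward/backward single-token rates, aperiodicity via positive self-loops). The one point where the paper is slightly sharper is in (ii): it analyzes a \emph{single-token-update} kernel for which $p_t$ is exactly stationary (and only afterwards remarks that the parallel update of Algorithm~\ref{alg:sample} approximates it), whereas your parallel kernel has $p_t$ stationary only up to $o(h)$; the paper's irreducibility argument also makes explicit the strict-positivity assumption on the softmax posteriors, which is precisely the obstacle you flagged.
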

One simplification to \eqref{e:u_t_corrector} can be done in the case of paths constructed with conditional as in \eqref{e:p_t_cond}, independent coupling $\pi(x_0,x_1)=p(x_0)q(x_1)$, and i.i.d.~source  $p(x_0)=\prod_{i=1}^N p(x_0^i)$, \eg, $p(x_0^i)$ is uniform over $[d]$ or $\delta_\dummy(x_0^i)$. In this case, the backward-time formula in \eqref{e:u_t_noise} take an equivalent simpler form
\begin{equation} \label{e:u_t_noise_simple}   \check{u}_t^i(x^i,z)=\frac{\dot{\kappa}_t}{\kappa_t}\brac{\delta_{z}(x^i) - p(x^i)},
\end{equation}
which does not require estimation of the posterior $p_{0|t}$. See~\Cref{a:time_backward} for the derivation.

\paragraph{\textbf{Training.}} Equation \ref{e:u_t_general} shows that for generating samples from a probabilty path $p_t(x)$ we require the posteriors $\hat{w}_t^j(x^i|X_t)$. Training such posteriors can be done by minimizing the loss
\begin{equation}\label{e:loss}
    \gL(\theta) = -\sum_{j\in [m],i\in [N]}\E_{t,(X_0,X_1),X_t,Y_j^i} \log \hat{w}_{t}^{j}(Y_j^i|X_t;\theta),
\end{equation}
where $t$ is sampled according to some distribution in $[0,1]$ (we used uniform), $(X_0,X_1)\sim \pi(X_0,X_1)$, $X_t \sim p_t(X_t | X_0, X_1)$, and $Y_j^i\sim w^j(Y_j^i|X_0,X_1)$;  $\theta\in\Real^p$ denotes the learnable parameters. In the common case we use in this paper of learning a single posterior, \ie, the probability denoiser $p_{1|t}$, the loss takes the form $\gL(\theta)=-\sum_{i\in [N]}\E_{t,(X_0,X_1),X_t}\log p_{1|t}(X_1^i|X_t)$. In~\Cref{a:training} we prove:
\begin{proposition}\label{prop:training}
    The minimizer of $\gL$ (\eqref{e:loss}) is $\hat{w}_t^j(x^i|X_t)$ (\eqref{e:posterior_w}).\vspace{-10pt}
\end{proposition}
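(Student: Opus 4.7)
The plan is to reduce $\gL$ to a sum of cross-entropy terms, one per triple $(t,i,j)$, and then invoke Gibbs' inequality. Fix a single summand indexed by $(i,j)$ and a fixed time $t$. Writing out the expectation explicitly against the joint distribution induced by $(X_0,X_1)\sim\pi$, $X_t\sim p_t(\cdot|X_0,X_1)$, and $Y_j^i\sim w^j(\cdot|X_0,X_1)$, the summand equals
\begin{equation*}
-\sum_{z,y\in[d]}\log \hat{w}_t^j(y\,|\,z;\theta)\ \sum_{x_0,x_1\in\gD}\pi(x_0,x_1)\,p_t(z\,|\,x_0,x_1)\,w^j(y\,|\,x_0,x_1).
\end{equation*}

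Next I would recognize the inner sum as $p_t(z)\,\hat{w}_t^j(y\,|\,z)$ using Bayes' rule from~\eqref{e:p_t_x0_x1_given_Xt} and the definition of $\hat{w}_t^j$ in~\eqref{e:posterior_w}: the factor $\pi(x_0,x_1)p_t(z|x_0,x_1)$ equals $p_t(z)p_t(x_0,x_1|z)$, and summing $w^j(y|x_0,x_1)p_t(x_0,x_1|z)$ over $x_0,x_1$ gives $\hat{w}_t^j(y|z)$. Therefore the $(i,j)$-summand at time $t$ rewrites as the expected cross-entropy
\begin{equation*}
\E_{z\sim p_t}\brac{-\sum_{y\in[d]} \hat{w}_t^j(y\,|\,z)\,\log \hat{w}_t^j(y\,|\,z;\theta)}.
\end{equation*}

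Finally, for each $z$ with $p_t(z)>0$, Gibbs' inequality (equivalently, non-negativity of the KL divergence between $\hat{w}_t^j(\cdot|z)$ and $\hat{w}_t^j(\cdot|z;\theta)$ as distributions over $[d]$) shows that this inner cross-entropy is minimized precisely when $\hat{w}_t^j(\cdot|z;\theta)=\hat{w}_t^j(\cdot|z)$. Since the $(t,i,j)$ terms in $\gL$ decouple and share no parameters tying them together beyond the pointwise value of the network at $(t,z)$, and since the argument holds pointwise in $t$ and $z$ (integrating against the outer expectation over $t$ preserves the minimizer), the claimed posterior is a minimizer of $\gL$. The only mild subtlety is the degeneracy on $(t,z)$ with $p_t(z)=0$, where the minimizer is not uniquely constrained; this does not affect sampling since such states are never encountered, and can be flagged in one line. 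No step looks to be a real obstacle: the one piece of care is the Bayes manipulation that identifies the marginalized summand with $p_t(z)\hat{w}_t^j(y|z)$, which is where the definition of the posterior in~\eqref{e:posterior_w} must be matched exactly.
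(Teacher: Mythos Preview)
Your proposal is correct and follows essentially the same route as the paper: write the expectation out, use Bayes' rule $p_t(z|x_0,x_1)\pi(x_0,x_1)=p_t(z)p_t(x_0,x_1|z)$ to identify the inner sum with $p_t(z)\hat{w}_t^j(y|z)$, and conclude by cross-entropy minimization. One small notational slip: in your first display the outer sum should be over $z\in\gD$ (not $z\in[d]$), since $X_t$ is the full sequence; otherwise the argument matches the paper's proof line for line, with your explicit mention of Gibbs' inequality and the $p_t(z)=0$ caveat being welcome additions.
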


\begin{table}[t!]
  \small
  \resizebox{\columnwidth}{!}{%
  \begin{NiceTabular}{lcccccc}
  \CodeBefore
    \cellcolor{redentropy}{4-6,6-6}
    \rowcolor{secondbest}{9-10}
    \Body
  \toprule
    \textsc{Method}     
    & \textsc{NFE}  &\textsc{Llama-2}$\downarrow$ &\textsc{Llama-3}$\downarrow$ &\textsc{GPT2}$\downarrow$ & \textsc{Entropy} \\
    \toprule
    Data & - & 7.0 & 9.4 & 14.7 &7.7 \\
    \midrule
    Autoregressive & 1024 & 31.4 & 54.8 
    & 45.3 & 7.1  \\
    \citet{savinov2021step}&200& 29.5&45.1&34.7&5.2\\
    \citet{austin2021structured}&1000 & 697.6 & 768.8 & 837.8 & 7.6 \\
    \citet{han2022ssd}&$>$10000 & 73.3 & 203.1 & 99.2 & 4.8 \\
    \citet{lou2023discrete} & \slashNumbers{256}{512}{1024}  &\slashNumbers{38.6}{33.7}{27.2}&\slashNumbers{69.2}{58.6}{43.9}&\slashNumbers{64.3}{53.4}{40.5}&\slashNumbers{7.8}{7.7}{7.6}\\
    \citet{campbell2024generative} & \slashNumbers{256}{512}{1024} & \slashNumbers{38.5}{33.5}{28.7} & \slashNumbers{69.0}{56.5}{46.5} & \slashNumbers{65.2}{53.3}{43.0} & \slashNumbers{7.8}{7.7}{7.6}&\\
    \textbf{\method~(\eqref{e:p_t_cond})} & \slashNumbers{256}{512}{1024} & \slashNumbers{34.2}{30.0}{22.5} & \slashNumbers{58.5}{48.8}{33.8} & \slashNumbers{54.2}{43.5}{29.3} & \slashNumbers{7.7}{7.6}{7.2}\\
    \textbf{\method~(\eqref{e:p_t_3_convex})} & \slashNumbers{256}{512}{1024} & \slashNumbers{30.0}{27.5}{22.3} & \slashNumbers{48.2}{43.5}{31.9} & \slashNumbers{47.7}{41.8}{28.1} & 
    \slashNumbers{7.6}{7.5}{7.1}\\
    \bottomrule
  \end{NiceTabular}}
  \caption{Generative perplexity on unconditional text generation compared to prior work. All models are sampled without the use of temperature or corrector steps. Double precision sampling results are reported in~\Cref{tab:app:double_comparison}.
  }
  \label{tab:comparison}
\end{table}

\section{Related work}
In the section we cover the most related work to ours; in~\Cref{a:related_works_B} we cover other related work. 

\textbf{Discrete Flows~\citep{campbell2024generative}}~is probably the most related work to ours. We build upon their CTMC framework and offer the following generalizations and simplifications: We consider arbitrary couplings $(X_0,X_1)$, and offer a novel and rather general family of probability paths  (\eqref{e:p_t_cond_general}) for which we provide the generating probability velocities in a unified closed-form formula (equations \ref{e:u_t_general}-\ref{e:u_t_noise}). These in particular recreate the same formulas as the continuous Flow Matching counterpart (Table \ref{tab:discrete_and_continuous}). We furthermore develop a general corrector velocity  (\eqref{e:u_t_corrector}) that unifies both corrector iterations~\citep{song2020score,campbell2022continuous} and stochastic sampling of \citet{campbell2024generative}.  
We show that particular choices of noise schedulers $\kappa_t$ ($\kappa_t=t$ reproduces \citet{campbell2024generative}) and corrector schedulers provide a boost in results.  Lastly, we opted for the term \emph{probability velocity} for  $u^i_t(x^i,X_t)$ as it is not precisely a rate matrix in the state space $\gD\times \gD$ used in CTMC since $u_t^i(x^i,z)$ for all $i\in [N]$ define multiple self-edges $z\too z$.

\textbf{Masked modeling~\citep{ghazvininejad2019mask, chang2022maskgit}.} In case of a masked model, \ie, when the source distribution is $p(x)=\delta_\dummy(x)$, we achieve an interesting connection with MaskGit showing it is actually an instance of Discrete Flow Matching with a small yet crucial change to its sampling algorithm. First, in~\Cref{a:time_independent} we prove that in the masked setting, the probability denoiser $p_{1|t}$ is \emph{time-independent}:
\begin{proposition}\label{prop:time_independence}
    For paths defined by equations \ref{e:p_t} and \ref{e:p_t_cond} with source $p(x)=\delta_\dummy(x)$ the posterior $p_t(x_0,x_1|z)=p(x_0,x_1|z)$ is time-independent. Consequently, the probability denoiser $p_{1|t}(x^i|z)=p_1(x^i|z)$ is also time-independent. 
\end{proposition}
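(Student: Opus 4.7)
The plan is to compute the posterior $p_t(x_0,x_1|z)$ directly via Bayes' rule (\eqref{e:p_t_x0_x1_given_Xt}) and exhibit explicit cancellation of the $t$-dependent factors between numerator and denominator. The key structural observation is that because the source marginal concentrates on the all-mask sequence, the conditional path $p_t(z|x_0,x_1)$ factors over tokens in a form whose $t$-dependence is a function of $z$ alone.

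First I would substitute $\pi(x_0,x_1)=\delta_\dummy(x_0)q(x_1)$ (or the more general coupling, which still forces $x_0=(\dummy,\ldots,\dummy)$ almost surely) and invoke the token-factorization in \eqref{e:p_t} together with the conditional form \eqref{e:p_t_cond}. For a position where $z^i=\dummy$, the factor $p_t(z^i|x_0,x_1)$ evaluates to $1-\kappa_t$ (since $x_1^i\in[d]$ so $\delta_{x_1^i}(\dummy)=0$), while for a position where $z^i\in[d]$ it evaluates to $\kappa_t\,\delta_{x_1^i}(z^i)$. Letting $k=k(z)$ be the number of masked positions of $z$, this yields
\begin{equation*}
p_t(z\mid x_0,x_1)=(1-\kappa_t)^{k}\,\kappa_t^{N-k}\,\prod_{i:\,z^i\neq\dummy}\delta_{x_1^i}(z^i)\,\delta_\dummy(x_0).
\end{equation*}
The crucial point is that the scheduler-dependent prefactor $(1-\kappa_t)^{k}\kappa_t^{N-k}$ depends only on $z$, not on $(x_0,x_1)$.

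Next I would marginalize to obtain $p_t(z)=\sum_{x_0,x_1}p_t(z|x_0,x_1)\pi(x_0,x_1)$; the same $(1-\kappa_t)^{k}\kappa_t^{N-k}$ factor pulls out of the sum. Forming the Bayes quotient $p_t(x_0,x_1|z)=p_t(z|x_0,x_1)\pi(x_0,x_1)/p_t(z)$ then cancels this prefactor entirely, leaving a quantity that depends on $(x_0,x_1,z)$ through $\pi(x_0,x_1)\prod_{i:\,z^i\neq\dummy}\delta_{x_1^i}(z^i)\delta_\dummy(x_0)$ normalized in $(x_0,x_1)$, which is precisely the time-independent $p(x_0,x_1|z)$.

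Finally, the consequence for the denoiser follows by direct substitution into its definition $p_{1|t}(x^i|z)=\sum_{x_0,x_1}\delta_{x_1}(x^i)\,p_t(x_0,x_1|z)$: since the measure on the right is time-independent, so is $p_{1|t}(x^i|z)=p_1(x^i|z)$. The only subtle step is verifying the cancellation cleanly, which requires carefully tracking that $\dummy\notin[d]$ so $\delta_{x_1^i}(\dummy)=0$ for all data tokens $x_1^i$; apart from this bookkeeping, the argument is a one-line Bayes manipulation.
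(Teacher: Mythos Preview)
Your proposal is correct and follows essentially the same route as the paper: compute the per-token conditional $p_t(z^i|x_0,x_1)$ under $x_0=\dummy$, observe that it factors as a $z$-only scheduler prefactor $(1-\kappa_t)^{k(z)}\kappa_t^{N-k(z)}$ times a $t$-free indicator in $x_1$, and cancel that prefactor between numerator and denominator in Bayes' rule. The paper leaves the assumption $x_1^i\ne\dummy$ implicit in its case split, whereas you flag it explicitly; otherwise the arguments are identical.
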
    
This shows that the probability denoiser can be learned with no time dependence, similar to the unmasking probabilities in MaskGit. During sampling however, there are two main differences between our sampling and MaskGit sampling. First, unmasking of tokens in our algorithm is done according to the probability $\delta_{X_t}(x^i)+hu^i_t(x^i,X_t)$ \emph{independently} for each token $x^i$, $i\in [N]$. This procedure is justified as it samples from the correct probability asymptotically via the derivation of the Continuity Equation \ref{e:continuity_equation_derivation}. This is in contrast to MaskGit that prioritizes the token to be unmasked according to some \emph{confidence}. In the experiments section we show that MaskGit's prioritization, although has some benefit in the very low NFE regime, is actually introducing a strong bias in the sampling procedure and leads to inferior overall results. Secondly, using corrector sampling allows for reintroducing masks to already unmasked tokens in a way that is still guaranteed to produce samples from $p_t$, see Theorem \ref{thm:corrector}; we find this to have a significant positive effect on the generation quality. 

\textbf{Discrete diffusion.} D3PM~\citep{austin2021structured} and Argmax flows~\citep{hoogeboom2021argmax} introduced diffusion in discrete spaces by proposing a corruption process for categorical data. A later work by~\citet{campbell2022continuous} introduced discrete diffusion models with continuous time, and~\citet{lou2023discrete} proposed learning probability ratios, extending score matching~\citep{song2019generative} to discrete spaces. \vspace{-10pt}

\begin{table*}[t!]
  \centering
  \begin{NiceTabular}{lcccccc}
    \CodeBefore
    \cellcolor{redentropy}{5-6}
    \rowcolor{secondbest}{6-7}
    \Body
    \toprule
    \textsc{Method} &\textsc{Model Size}    
    &\textsc{NFE}
    &\textsc{Llama-2}$\downarrow$ &\textsc{Llama-3}$\downarrow$  & \textsc{Entropy} \\
    \toprule
    Llama-3 \footnotesize{(Reference)} & 8B  & 512 &6.4  &  7.3& 6.8& \\
    Llama-2 \footnotesize{(Reference)} & 7B  & 512 &5.3 & 8.3 & 7.1& \\
    \midrule
    Autoregressive & 1.7B  & 512 &14.3 & 22.3 & 7.2 \\
    \citet{savinov2021step} & 1.7B & 200 &10.8 & 15.4 & 4.7 \\
    \textbf{\method~(U-coupling)} & 1.7B  & \slashNumbersTwo{256}{512}&\slashNumbersTwo{10.7}{9.5}& \slashNumbersTwo{11.2}{10.3} & \slashNumbersTwo{6.7}{6.7}\\
    \textbf{\method~(C-coupling)} & 1.7B  & \slashNumbersTwo{256}{512} & \slashNumbersTwo{10.2}{8.9} &\slashNumbersTwo{10.0}{9.7} & \slashNumbersTwo{6.8}{6.7}\\
    \bottomrule
  \end{NiceTabular}
  \caption{Generative perplexity on conditional text generation. }
  \label{tab:text_scale}
\end{table*}

\begin{table*}[t!]
  \centering
  \begin{NiceTabular}{llcccccc}
    \CodeBefore
    \rowcolor{secondbest}{5-7}
    \Body
    \toprule
    \multirow{2}{*}{\textsc{Method}} & \multirow{2}{*}{\textsc{Data}} &\multicolumn{3}{c}{\textsc{HumanEval}$\uparrow$} &\multicolumn{3}{c}{\textsc{MBPP (1-shot)}$\uparrow$} \\
    \cmidrule(r){3-5}\cmidrule(r){6-8}
    & & Pass@1 &Pass@10 & Pass@25 & Pass@1 &Pass@10 & Pass@25 \\
    \midrule
    Autoregressive & Text & 1.2  & 3.1  & 4.8    & 0.2  & 1.7  & 3.3  \\
    & Code & 14.3 & 21.3 & 27.8   & 17.0 & 34.3 & 44.1 \\ 
    \textbf{\method} & Text& 1.2 & 2.6 & 4.0 & 0.4 & 1.1 & 3.6\\
    &Code & 6.7 & 13.4 & 18.0 & 6.7 & 20.6 & 26.5\\
    \textbf{\method~(Oracle length)}&Code & 11.6 & 18.3 & 20.6 & 13.1
    & 28.4 & 34.2 \\
    \bottomrule
  \end{NiceTabular}
  \caption{Execution based code generation evaluation.}
  \label{tab:human_eval}
\end{table*}

\section{Experiments}

We evaluate our method on the tasks of language modeling, code generation, and image generation. For language modeling, we compare the proposed method against prior work considering the widely used generative perplexity metric. We scale the models to 1.7 billion parameters and present results on coding tasks, i.e., HumanEval~\citep{chen2021evaluating}, MBPP~\citep{austin2021program}, demonstrating the most promising results to date in a non-autoregressive context. In image generation, we present results for a fully discrete CIFAR10~\citep{krizhevsky2009learning}. Further details of the experimental setup for each model are provided in~\Cref{app:experimental_setup}.

\textbf{Experimental setup.} In our experiments we used the masked source, \ie, $p=\delta_\dummy$, and trained with both unconditional coupling (U-coupling, \eqref{e:unconditional}) and conditional couplings (C-coupling, \eqref{e:conditioning}) with the probability path defined in equations \ref{e:p_t}, \ref{e:p_t_cond} and in one case \ref{e:p_t_3_convex}. We trained a probability denoiser (loss in \eqref{e:loss}) and sampled using the generating velocity in \eqref{e:u_t_denoiser} and Algorithm \ref{alg:sample}. We used a particular choice of probability path scheduler $\kappa_t$, as well as corrector steps defined by a scheduler $\alpha_t$ and temperature annealing. We found the choice of these schedulers to be pivotal for the model's performance. In~\Cref{app:ablation} we perform an ablation study, evaluating various scheduler choices. 

\subsection{Language modeling}
We experimented with our method in three settings: (i) Small model (150M parameters) - comparison to other non-autoregressive baselines in unconditional text generation; (ii) Large model (1.7B parameters) - comparison to autoregressive models in conditional text generation; and (iii) Large model (1.7B parameters) - conditional code generation. As computing exact likelihood for non-autoregressive model is a challenge, for (i),(ii) we use the generative perplexity metric (\Cref{app:experimental_setup}  measured with GPT2~\citep{Radford2019LanguageMA}, Llama-2~\citep{touvron2023llama}, and Llama-3, and we also monitor the sentence entropy (\Cref{app:experimental_setup}) to measure diversity of tokens and flag repetitive sequences, which typically yield low perplexity. Throughout our experiments we noticed entropy $\geq 6$ usually corresponds to diverse texts. For (iii) we evaluated using the success rate of coding tasks.

\textbf{Evaluation against prior work.} We evaluate our method against prior work on non-autoregressive modeling. For a fair comparison, all methods are trained on a 150M parameters models using the OpenWebText~\citep{Gokaslan2019OpenWeb} dataset. We also fix all sampling hyperparameters to the most basic settings, \ie, no temperature, top probability, corrector steps, etc. For our method we tried two paths defined by equations \ref{e:p_t_cond} and \ref{e:p_t_3_convex}. Results are reported in \Cref{tab:comparison}, where our method outperforms all baselines in generative perplexity for all numbers of function evaluations (NFE).

\textbf{Conditional text generation.} In this experiment, we train both C-coupling and U-coupling 1.7B parameters \method~models with paths defined by \eqref{e:p_t_cond} on a large scale data mix~\citep{touvron2023llama}. \Cref{tab:text_scale} presents the generative perplexity of conditional generations from our method; the conditions we used are the prefixes of the first 1000 samples in OpenWeb dataset. We also compare to existing state-of-the-art autoregressive models. Our results demonstrate that our model effectively narrows the gap in generative perplexity with autoregressive models, while maintaining an entropy comparable to the recent Llama-3 8B model. Furthermore, we note the C-coupling trained model produces slightly better perplexity in conditional tasks than the U-coupling model. In~\Cref{app:qual_text} we present qualitative conditional samples produced by our U-coupling model.

\textbf{Code generation.} Here we trained our basic setting of a 1.7B parameters \method~model with U-coupling and path as in \eqref{e:p_t_cond} on a code-focused data mix~\citep{roziere2023code}. \Cref{tab:human_eval} presents results on HumanEval and MBPP (1-shot) for pass@$\{1, 10, 25\}$. In \Cref{tab:human_eval}, `Oracle length' evaluates the performance of our model when conditioning on the length of the solution. This is done by inserting an `end of text' token in the same position of the ground truth solution. Our method achieves non-trivial results on both tasks, which to the best of our knowledge is the first instance of a non-autoregressive method being capable of non-trivial coding tasks. In~\Cref{sec:fim}, we analyze the proposed method for code infilling, which can be achieved as our model allows non-autoregressive generation.
Lastly, in~\Cref{app:qual_code} we show qualitative examples of success and failure cases produced by our model on the coding tasks, and in~\Cref{app:qual_infilling} we show examples of code infilling.

\subsection{Image generation} 
\begin{figure*}[t!]
    \centering
    \begin{subfigure}[t]{0.35\textwidth}
        \centering
        \includegraphics[width=\columnwidth]{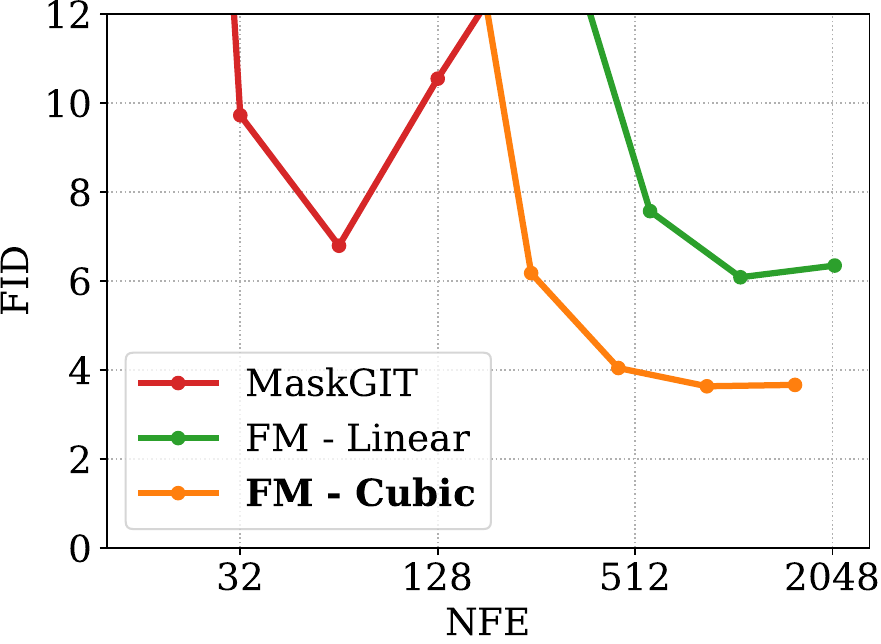}
        \caption{FID.}
        \label{fig:cifar10}
    \end{subfigure}%
    ~ 
    \begin{subfigure}[t]{0.35\textwidth}
        \centering
        \includegraphics[width=\columnwidth]{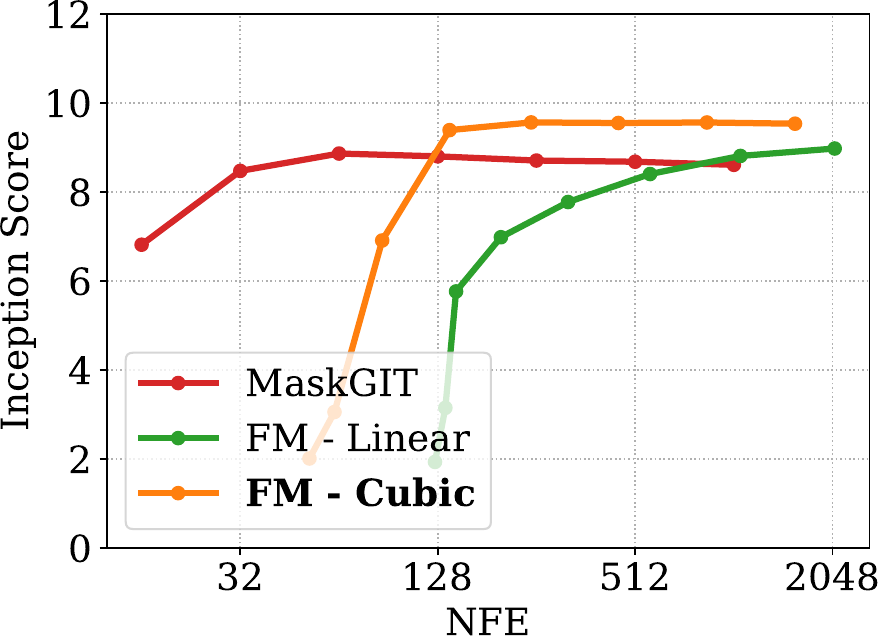}
        \caption{Inception Score.}
        \label{fig:cifar10_inception}
    \end{subfigure}
    \caption{FID and Inception scores vs. number of function evaluations (NFE).}
\end{figure*}

We performed a fully discrete image generation, without using any metric or neighboring information between color values. We trained an \method~model with U-coupling and path as in \eqref{e:p_t_cond} on CIFAR10 to predict discrete color value for tokens, \ie, $d=256$, with sequence length of $N = 32\times 32\times 3$. For generative quality we evaluate the Fréchet Inception Distance (FID)~\citep{heusel2017GANs}. Ablations for the probability path schedulers are provided in~\Cref{tab:cifar10_sched_abl} in the~\Cref{app:experimental_setup}. 
In~\Cref{fig:cifar10} we compare our method with: (i) MaskGIT~\citep{chang2022maskgit}; and (ii) \citep{campbell2024generative} which coincides with our method for a linear scheduler. More details in~\Cref{app:experimental_setup}. As can be seen in the~\Cref{fig:cifar10}, our method outperforms both baselines, achieving $3.63$ FID at $1024$ NFE. In~\cref{fig:cifar10_inception} we observe a similar trend when evaluating Inception score. As discussed above, MaskGit sampling performs better for low NFE but quickly deteriorates for higher NFE. We attribute this to a bias introduced in the sampling process via the confidence mechanism.

\section{Conclusions and future work}

We introduce Discrete Flow Matching, a generalization of continuous flow matching and discrete flows that provides a large design space of discrete non-autoregressive generative models. Searching within this space we were able to train large scale language models that produce generated text with an improved generative perplexity compared to current non-autoregressive methods and able to solve coding tasks at rates not achievable before with non-autoregressive models, as far as we are aware. While reducing the number of network evaluations required to generate a discrete sample compared to autoregressive models, Discrete Flow Matching still does not achieve the level of sampling efficiency achieved by its continuous counterpart, flagging an interesting future work direction. Another interesting direction is to explore the space of probability paths in \eqref{e:p_t_cond_general} (or a generalization of which) beyond what we have done in this paper. We believe discrete non-autoregressive models have the potential to close the gap and even surpass autoregressive models as well as unlock novel applications and use cases. As our work introduces an alternative modeling paradigm to discrete sequential data such as language and code, we feel it does not introduce significant societal risks beyond those that already exist with previous large language models. 

\clearpage
\newpage

\bibliographystyle{assets/plainnat}
\bibliography{paper}

\clearpage
\newpage
\beginappendix

\section{Related works, continuation}\label{a:related_works_B}
We provide here some more details on relevant related works.

\paragraph{\textbf{Continuous diffusion and flows.}} Another line of works has been exploring the use of continuous space diffusion for discrete data, typically operating in the logits space~\citep{dieleman2022continuous, Li2022DiffusionLMIC, han2022ssd, Lin2022GENIEL, Chen2022AnalogBG}. An additional body of work has been focusing on the adoption of latent diffusion-like modeling~\citep{Lovelace2022LatentDF,He2022DiffusionBERTIG}.~\citet{stark2024dirichlet} proposed to learn a continuous Flow Matching on the probability simplex with Dirichlet paths.

\paragraph{\textbf{Autoregressive modeling.}} Autoregressive models have been a significant area of focus in recent years, particularly in the context of natural language processing and machine learning \citep{zhao2023survey}. Autoregressive modeling, in its most fundamental form, utilizes the chain rule to learn the joint sequence probability by breaking it down into next-token conditional probabilities. GPT-2~\citep{Radford2019LanguageMA}, showcased the power of autoregressive language models in generating coherent and contextually relevant text over long passages. Its successor, GPT-3~\citep{brown2020language}, further pushed the boundaries, demonstrating impressive performance across a range of tasks without task-specific training data. Later models were adapted to other domains such as, code~\citep{roziere2023code, li2023starcoder, chen2021evaluating}, biology~\citep{zhang2024scientific, ferruz2022controllable, madani2023large}, math~\citep{romera2024mathematical, imani2023mathprompter, ahn2024large}, audio~\citep{kreuk2022audiogen, copet2024simple, hassid2024textually} and more.

\paragraph{\textbf{Masked generative modeling.}} Masked generative modeling proposes to mask a variable portion of the input sequence and training a model to predict this masked section.  ~\citet{ghazvininejad2019mask} proposed Mask-Predict, a masked language modeling with parallel decoding. ~\citet{savinov2021step} extended the mask-modeling approach by employing an additional loss term that incorporates rolling model predictions. MaskGIT~\citep{chang2022maskgit} followed a similar path, for the task of class-conditioned image synthesis, ~\citet{chang2023muse} extended this approach to high-quality textually guided image generation over low-resolution images followed by a super-resolution module. Recently,~\citet{ziv2024masked} proposed a text-to-music method, which relies on the MaskGIT foundations while observing that span masking boosts the quality of the generated sequence significantly.

\section{Further implementation details}

\paragraph{\textbf{Safe sampling.}} When sampling according to Algorithm \ref{alg:sample} using the generating probability velocity in \eqref{e:u_t_general}, an arbitrary step size $h>0$ can make some probabilities in $\delta_{X_t^i}(\cdot) + hu_t^i(\cdot,X_t)$ negative and consequently require clamping and injecting further error into the sampling process that can in turn  accumulate to a non-negligible global sampling error. A simple fix that guarantees a valid probability distribution while keeping the $o(h)$ sampling error at the relatively manageable price of potentially more function evaluations is using the following adaptive step size in Algorithm \ref{alg:sample}: at time $t\in [0,1)$ use
\begin{equation}
    h_{\text{\tiny adaptive}} = \min\set{h,\min_i \abs{\frac{\kappa_t^{i,\ell}}{\dot{\kappa}_t^{i,\ell}}}}.
    \label{ea:u_t_general}
\end{equation}
As can be verified with the general probability velocity formula in \eqref{e:u_t_general}, the above choice for $h_{\text{\tiny adaptive}}$ guarantees $\delta_{X_t^i}(\cdot) + hu_t^i(\cdot,X_t)$ is a valid PMF. As mostly used in this paper, for the probability denoiser parameterization (\eqref{e:u_t_denoiser}) the adaptive step is
\begin{equation}
    h_{\text{\tiny adaptive}} = \min\set{h,\frac{1-\kappa_t}{\dot{\kappa}_t}}.
\end{equation}
With the corrector sampling (equations \ref{e:u_t_corrector} and \ref{ea:u_t_corrector}) we have the adaptive step:
\begin{equation}
    h_{\text{\tiny adaptive}} = \min\set{h,\brac{\frac{\alpha_t \dot{\kappa}_t}{1-\kappa_t} + \frac{\beta_t \dot{\kappa}_t}{\kappa_t}}^{-1}}.
\end{equation}

\paragraph{\textbf{Conditioning.}} In our unconditional coupling (U-coupling), see \eqref{e:conditioning}, we define the conditioning pattern based on prefixes of random length $N_0<N$, \ie, 
\[
    \sI=(\overbrace{1,\ldots,1}^{N_0},\overbrace{0,\ldots,0}^{N-N_0}).
\]
During the training phase, we sample $N_0\sim \mathcal{U}(0, N)$ and adjust the input sequence in accordance with the mask $\sI$.

During conditional sampling with Algorithm \ref{alg:sample} we replace, after each update step, the relevant tokens with the conditioned ones, \ie, $\tilde{X} = \sI \odot Y  + (\one-\sI)\odot X$, where $X$ is the current sample, $Y$ is the condition, and $\sI$ is the condition's mask. 

\paragraph{\textbf{NFE bound.}} For mask modeling, \ie, $p=\delta_\dummy$, we have seen that the probability denoiser is time-independent (see Proposition \ref{prop:time_independence}). Consequently, when sampling with Algorithm \ref{alg:sample} and $u_t$ from \eqref{e:u_t_denoiser} without corrector sampling one is not required to recompute the forward pass $p_{1|t}(\cdot|X_t)$ if $X_t$ is identical to $X_{t-h}$ (\ie, no $\dummy$ has been unmasked). This means that the NFE of Algorithm \ref{alg:sample} in this case is bounded by the number of tokens $N$. 

\paragraph{\textbf{Post training scheduler change.}} For a trained posterior $\hat{w}_{t}(x^i | z)$ of a conditional probability path as in \eqref{e:p_t_cond} with a scheduler $\kappa_t$, the velocity is given by equations \ref{e:u_t_denoiser} or \ref{e:u_t_noise}, where $\hat{w}_{t}(x^i | z)$ is either $p_{1|t}(x^i|z)$ or $p_{0|t}(x^i|z)$ respectively. In this case, we can apply the velocities in equations \ref{e:u_t_denoiser} and \ref{e:u_t_noise} for sampling with any scheduler $\kappa'_t$, using the change of scheduler formula for posteriors,
\begin{equation}\label{e:sched_change_formula}
        \hat{w}_{t}'(x^i|z) = \hat{w}_{t'}(x^i|z) , 
    \end{equation}
where $\hat{w}'_{t}(x^i|z)$, is the posterior of the scheduler $\kappa'_t$, $t' = \kappa^{-1}_{\kappa'_t}$, and $\kappa^{-1}$ is the inverse of $\kappa$. The scheduler change formula in \eqref{e:sched_change_formula} is proved in Proposition \ref{prop:sched_change_post}. We note that by Proposition \ref{prop:time_independence}, for mask modeling, \ie, $p=\delta_\dummy$, the posterior $\hat{w}_{t}(x^i | z)$ is time independent. Hence, in that case, the posterior is not affected by a scheduler change.

\section{Code infilling}
\label{sec:fim}
\begin{wrapfigure}{r}
{0.5\textwidth}
\vspace{-0.9cm}
\centering
\includegraphics[width=0.4\textwidth]{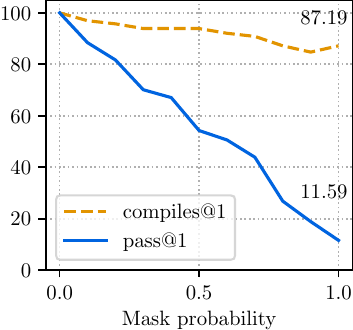}
\caption{Pass@1 and compiles@1 scores for the 1.5B parameter models as a function of the input masking rations on HumanEval. \label{fig:fim}}
\end{wrapfigure}

We additionally evaluate the proposed method considering the task of code infilling. In which, we are provided with an input prompt that contains various spans of masked tokens, and our goal is to predict them based on the unmasked ones. See \Cref{fig:code_teaser} (middle and right sub-figures) for a visual example. Notice, this evaluation setup is the most similar to the training process. 

For that, we randomly mask tokens with respect to several masking rations, $p \in \{0.0, 0.1, 0.2, \dots, 1.0\}$, from HumanEval and report both pass@1 and compiles@1 metrics. For the purpose of this analysis, we provide the oracle length for each masked span. In other words, the model predicts the masked tokens for already given maks length. Results for the 1.5B parameters models can be seen in \Cref{fig:fim}. As expected, both pass@1 and compiles@1 keep improving as we decrease the level of input masking. Interestingly, when considering the fully masked sequence, providing the oracle prediction length significantly improves the pass@1 scores (6.7 vs. 11.6).

\section{Ablations}\label{app:ablation}

\begin{figure}
     \centering
     \begin{subfigure}[b]{0.32\textwidth}
        \includegraphics[width=\textwidth]{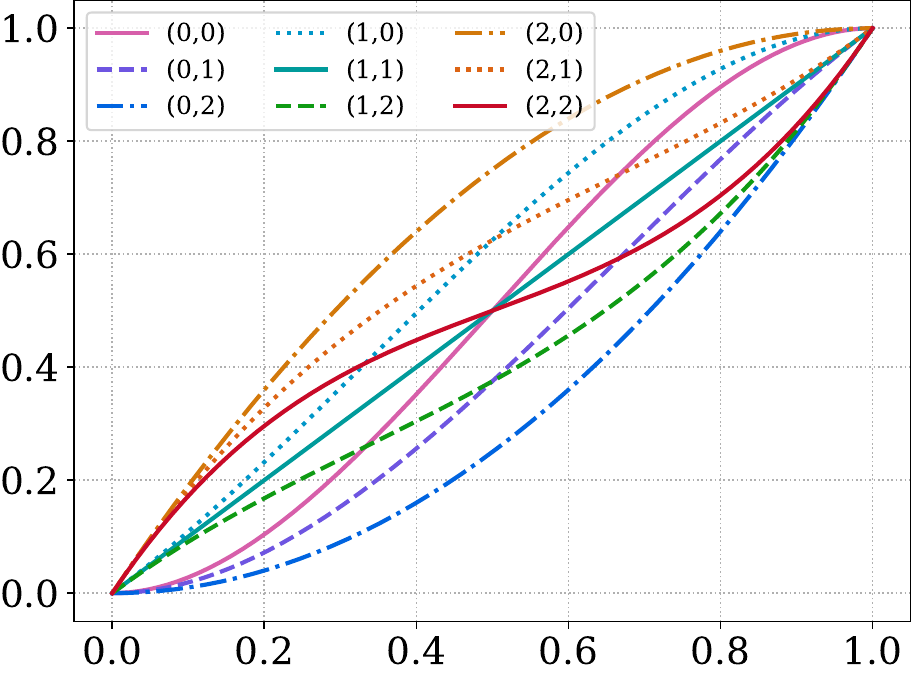}
        \caption{Path scheduler, cubic poly. }\label{fig:bezier}
     \end{subfigure}
     \hfill
     \begin{subfigure}[b]{0.32\textwidth}
         \includegraphics[width=\textwidth]{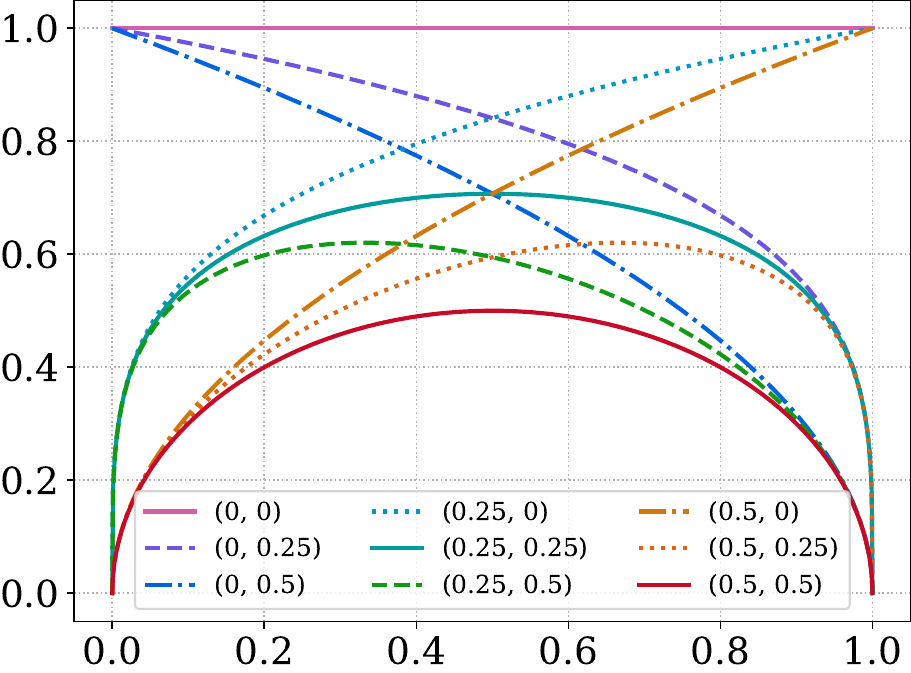}
        \caption{Corrector scheduler $t^a(1-t)^b$.}\label{fig:pred_sched}
     \end{subfigure}
     \hfill
     \begin{subfigure}[b]{0.32\textwidth}
         \centering
         \includegraphics[width=\textwidth]{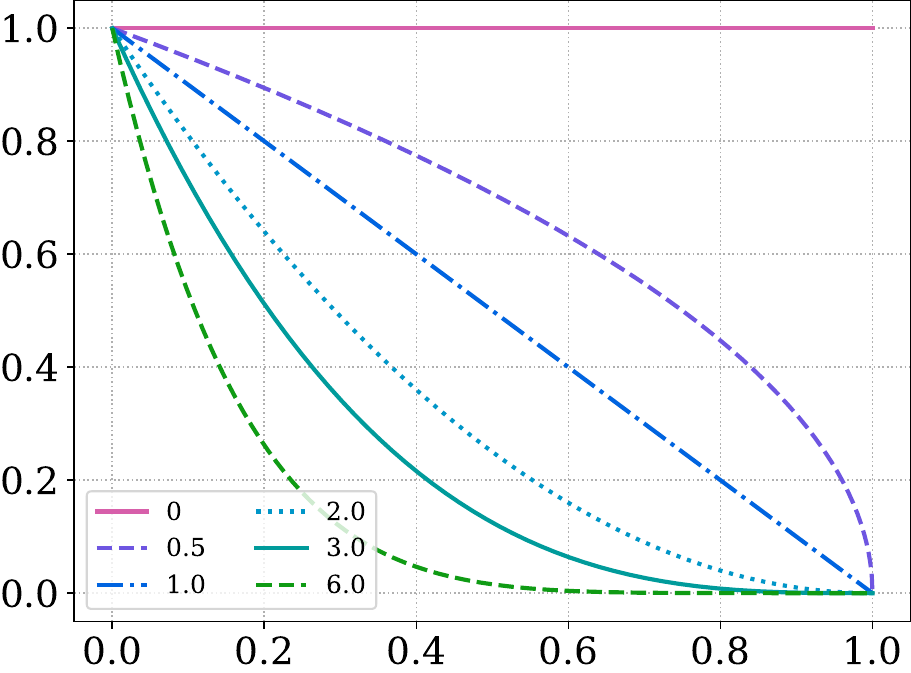}
        \caption{Temperature scheduler $(1-t)^a$.}\label{fig:temp_sched}
     \end{subfigure}
\end{figure}

\paragraph{\textbf{Train and sampling path scheduler choice ($\kappa_t$).}} We study how the choice of the probability path scheduler affects the model performance. For that, we consider a parametric family of cubic polynomial with parameters $a,b$:
\begin{equation}\label{e:scheduler}
    \kappa_t \triangleq -2t^3 + 3t^2 + a(t^3 -2t^2+t) + b(t^3-t^2).
\end{equation}
Note that $\kappa_0=0$ and $\kappa_1=0$ and $a$ and $b$ are setting the derivative of $\kappa_t$ at $t=0$ and $t=1$, respectively. We visualize this $\kappa_t$ with choices of $a,b\in\{0,1,2\}$ in~\Cref{fig:bezier}.

\begin{figure}[h!]
    \centering
    \includegraphics[width=\textwidth]{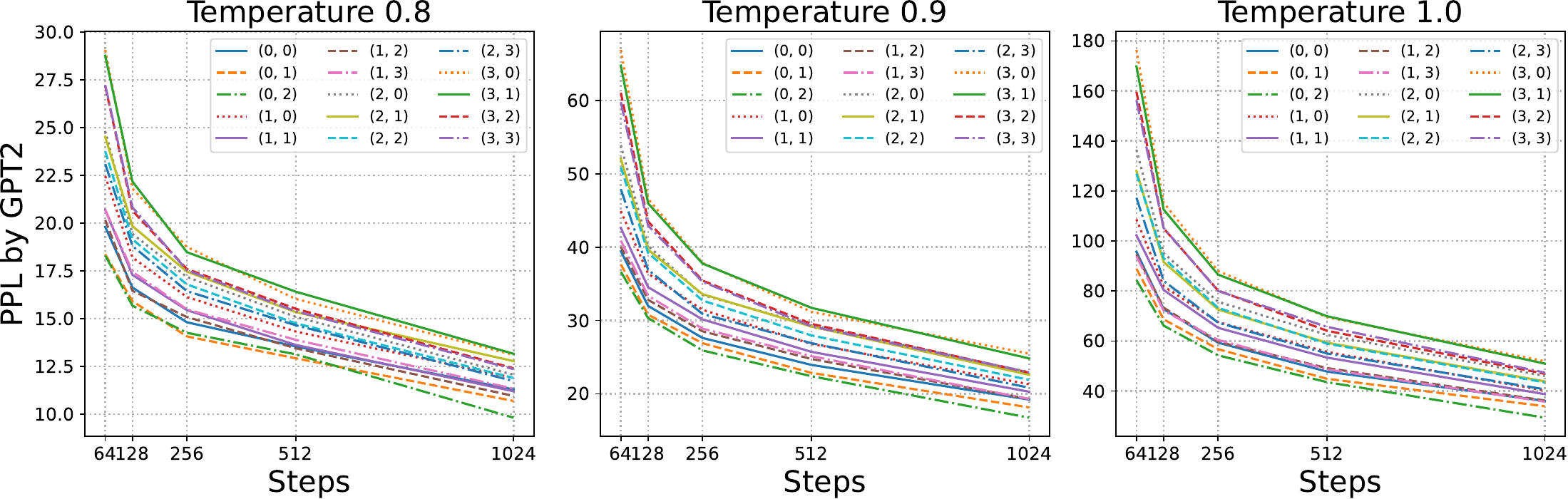}
    \caption{Path scheduler choice during training using various of constant temperature values.}\label{app:fig:bezier_ablation}
\end{figure}

To test the effect of path schedulers in training we have trained 150M parameters models for all choices of $a,b\in\{0,1,2,3\}$. We then generate 1000 samples from each model. The samples are computed using Algorithm \ref{alg:sample} with the path scheduler the model was trained on, and with temperature levels $\tau \in \{0.8, 0.9, 1\}$, where temperature is applied via
\begin{equation}
    p^\tau_{1|t}(x^i|X_t) = \tau^{-1} \log p_{1|t}(x^i|X_t).
\end{equation}
We then evaluate the generative perplexity of these samples with GPT-2.~\Cref{app:fig:bezier_ablation} shows the results. The graphs indicate that, in the context of text modality, the cubic polynomial scheduler with $a\equiv0, b\equiv2$ (equivalent to a square function) achieves the highest performance. Consequently, we exclusively used this scheduler for the language models. 

\begin{figure}[h!]
    \centering
    \includegraphics[width=\textwidth]{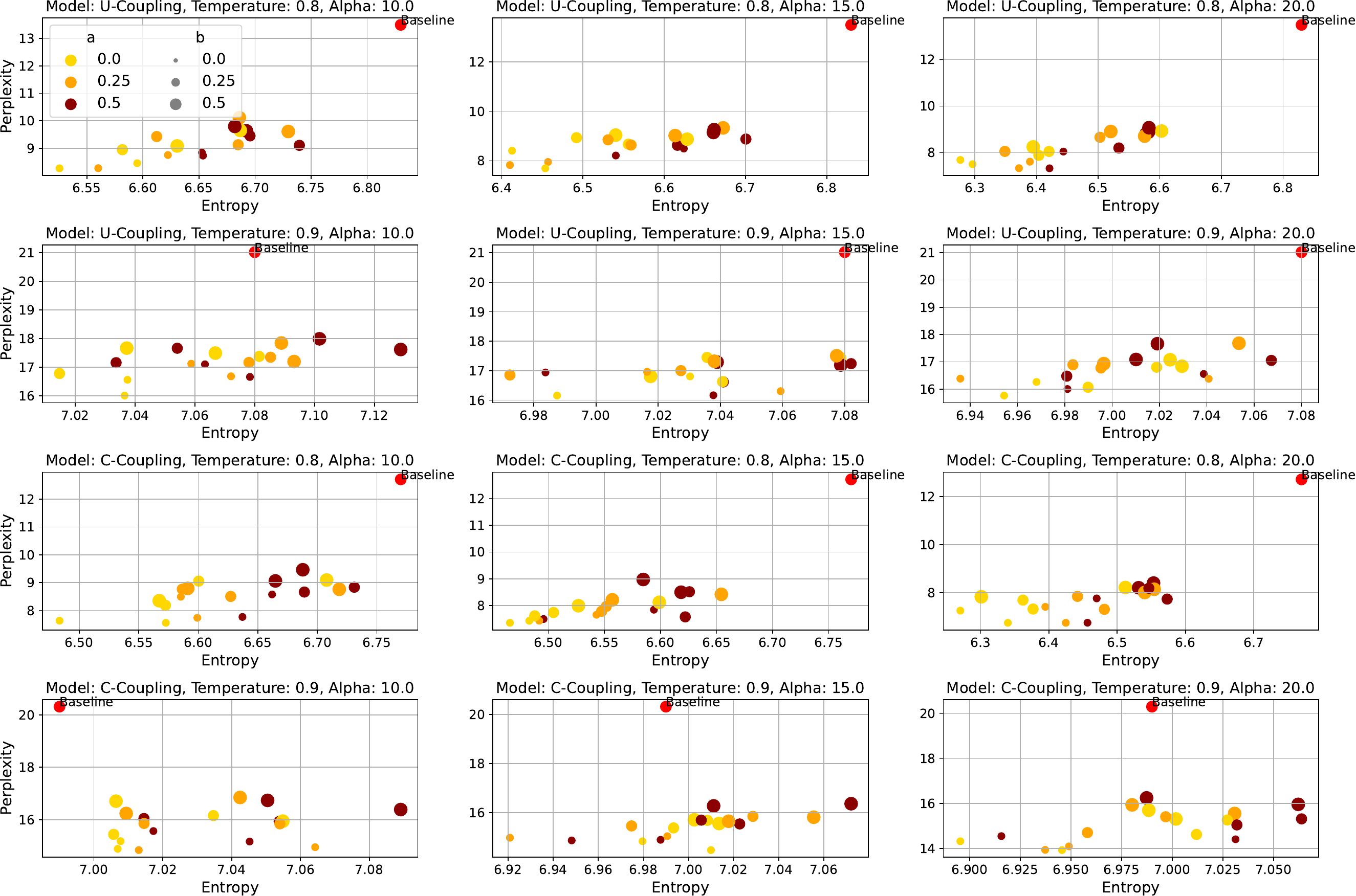}
    \caption{Corrector scheduler ablation.}\label{app:fig:corrector_ablation}
\end{figure}

\paragraph{\textbf{Corrector scheduler.}} In our experiments we only applied corrector sampling to our large models (U-coupling and C-coupling; 1.7B parameters). We used the optimal path schedulers from previous section and considered the following parametric family of schedulers for the corrector sampling:
\begin{equation}
    \alpha_t = 1 + \alpha t^a(1-t)^b,
\end{equation}
where, we set $\beta_t = \alpha_t - 1$ and generate 1000 samples using Algorithm \ref{alg:sample} with parameter values $a,b\in\set{0, 0.25, 0.5}$ and $\alpha\in \set{10,15,20}$. We then evaluated generative perplexity for these samples with Llama-2, showing results in \Cref{app:fig:corrector_ablation}. These plots indicate that smaller values of $a$ and $b$ result in lower perplexity values, albeit with somewhat reduced entropy. We therefore opted for setting $a=b=0.25$ that strikes a good balance between perplexity and entropy.

\paragraph{\textbf{Temperature scheduling.}} For temperature sampling, we consider the following scheduler:
\begin{equation}
    \tau_t = \tau (1-t)^2. 
\end{equation}

\section{Theory and proofs}

\subsection{Computation of the discrete divergence}\label{a:div}
We present the computation of the discrete divergence in \eqref{e:discrete_div}, \ie, 
\begin{equation}\label{ea:div_}
    \divv_x(p_t u_t) = -\sum_{z\in \gD} p_t(z) \brac{\sum_{i=1}^N \delta_{z}(x^{\bar{i}}) u_t^i(x^i,z)}.
\end{equation}

Computing the discrete divergence (\eqref{e:discrete_div}) of the flux $p_t u_t$ at a state $x$ amounts to adding outgoing flux from $x$ and subtracting the incoming flux into $x$. Using the fact that $\delta_{z}(x^{\bar{i}})=1$ if and only if $z=x$ or $z$ differs from $x$ only at the $i$-th token, gives:
\begin{align*}
    \divv_x(p_t u_t) &= \sum_{z\in \gD}\sum_{i=1}^N\delta_{x}(z^{\bar{i}}) \parr{p_t(x)u^i_t(z^i,x)  -p_t(z)u_t^i(x^i,z) } \\\
    &= p_t(x)\sum_{i=1}^N
    \sum_{z^i}\textcolor{red}{\overbrace{\brac{\sum_{z^{\bar{i}}}\delta_{x}(z^{\bar{i}})}}^{=1}}     
     u^i_t(z^i,x) - \sum_{z\in \gD}\sum_{i=1}^N\delta_{x}(z^{\bar{i}})   p_t(z)u_t^i(x^i,z) \\
     &= p_t(x)\sum_{i=1}^N
    \textcolor{red}{\overbrace{\brac{\sum_{z^i}u^i_t(z^i,x)  }}^{=0}}     
      - \sum_{z\in \gD}\sum_{i=1}^N\delta_{x}(z^{\bar{i}})   p_t(z)u_t^i(x^i,z)      \text{\textcolor{ForestGreen}{\qquad \Comment{\eqref{e:rate_conds}}}}
      \\
      &= - \sum_{z\in \gD}\sum_{i=1}^N\delta_{x}(z^{\bar{i}})   p_t(z)u_t^i(x^i,z),
\end{align*}
that gives \eqref{ea:div_} after noting that $\delta_x(z^{\bar{i}})=\delta_z(x^{\bar{i}})$.

\subsection{Conditional velocities lead to marginal velocities}\label{a:conditional_to_marginal}
We provide a simple proof for Theorem \ref{thm:cond_to_marginal}, originally proved in \citet{campbell2024generative}: 
\begin{reptheorem}{thm:cond_to_marginal}
    Given a conditional probability velocity  $u_t^i(x^i,X_t|x_0,x_1)$ generating a conditional probability path $p_t(x|x_0,x_1)$, the marginal velocity defined by     \begin{equation}%
        u_t^i(x^i,X_t) = \sum_{x_0,x_1\in \gD} u_t^i(x^i,X_t|x_0,x_1)p_t(x_0,x_1|X_t),
    \end{equation} 
    generates the marginal probability path $p_t(x)$, where by Bayes' rule     \begin{equation}
    p_t(x_0,x_1|X_t)=\frac{p_t(X_t|x_0,x_1)\pi(x_0,x_1)}{p_t(x)}.
    \end{equation}  
\end{reptheorem}
\begin{proof}[Proof (Theorem \ref{thm:cond_to_marginal})] 
We start by taking the time derivative of the marginal probability path, $p_t(x)= \sum_{x_0,x_1} p_t(x^i|x_0,x_1) \pi(x_0,x_1)$, as follows, 
\begin{align*}        
        \dot{p}_t(x) &= \sum_{x_0,x_1} \dot{p}_t(x|x_0,x_1) \pi(x_0,x_1) \\
        &= \sum_{x_0,x_1} \parr{\sum_z p_t(z|x_0,x_1) \brac{\sum_{i=1}^N \delta_z(x^{\bar{i}})u_t^i(x^i,z|x_0,x_1)}}\pi(x_0,x_1)
        \text{\textcolor{ForestGreen}{\qquad \Comment{Continuity Equation (\ref{e:ce})}}} \\
        &= \sum_z \textcolor{blue}{p_t(z)}\brac{\sum_{i=1}^N \delta_z(x^{\bar{i}}) \parr{\sum_{x_0,x_1} u_t^i(x^i,z|x_0,x_1)\frac{p_t(z|x_0,x_1)\pi(x_0,x_1)}{\textcolor{blue}{p_t(z)}}}}\\
        &= \sum_z p_t(z) \brac{\sum_{i=1}^N \delta_z(x^{\bar{i}}) u_t^i(x^i,z)} \\
        &= -\divv_x(p_t u_t)
    \end{align*}
    Now since $u_t^i(x^i,z)$ is a convex combinations of $u_t^i(x^i,z|x_0,x_1)$ and these satisfy \eqref{e:rate_conds} then also $u_t^i(x^i,X_t)$ satisfies \eqref{e:rate_conds}.
\end{proof}

\subsection{Probability velocities generating conditional probability paths}\label{a:pv_generated_conditional_pp}
Equation \ref{e:u_t_general} with the coefficients $a^{i,j}_t$ and $b^{i}_t$ are provided below,
\begin{equation}\label{ea:u_t_cond_general}
u^i_t(x^i,X_t|x_0,x_1) = \sum_{j=1}^m \textcolor{blue}{\overbrace{\brac{\dot{\kappa}_t^{i,j}-\kappa_t^{i,j}\frac{\dot{\kappa}_t^{i,\ell}}{\kappa_t^{i,\ell}}}}^{\textcolor{black}{a^{i,j}_t}}}w^j(x^i|x_0,x_1) + \textcolor{blue}{\overbrace{\brac{\frac{\dot{\kappa}_t^{i,\ell}}{\kappa_t^{i,\ell}}}}^{\textcolor{black}{b^{i}_t}}}\delta_{X_t}(x^i),
\end{equation}
where 
\begin{equation}\label{ea:ell}
 \ell = \ell(i,t) \defe   \argmin_{j\in[m]}\brac{\dot{\kappa}_t^{i,j}/\kappa_t^{i,j}}.   
\end{equation}
\begin{reptheorem}{thm:pvf_of_p_t_cond}
[Probability velocity of conditional paths]
A generating probability velocity for the conditional paths $p_t(x|x_0,x_1)$ defined in equations \ref{e:p_t} and \ref{e:p_t_cond_general} is 
\begin{equation}%
    u_t^i(x^i,X_t|x_0,x_1) =  \sum_{j=1}^m a_t^{i,j} w^j(x^i|x_0,x_1) + b_t^{i} \delta_{X_t}(x^i), 
\end{equation}
with $a_t^{i,j}=\dot{\kappa}_t^{i,j} - \kappa_t^{i,j}\dot{\kappa}_t^{i,\ell}/\kappa_t^{i,\ell}$, and $b_t^{i}=\dot{\kappa}_t^{i,\ell}/\kappa_t^{i,\ell}$ where $\ell=\argmin_{j\in [m]} \brac{\dot{\kappa}_t^{i,j}/\kappa_t^{i,j}}$.
\end{reptheorem}
\begin{proof}[Proof (Theorem \ref{thm:pvf_of_p_t_cond})] First, let us show that \eqref{ea:u_t_cond_general} satisfies the conditions in \eqref{e:rate_conds}: Fix $X_t\in \gD$, and 
\begin{align*}
    \sum_{x^i} u^i_t(x^i,X_t|x_0,x_1) &=  \sum_{x^i}\brac{\sum_{j=1}^m \brac{\dot{\kappa}_t^{i,j}-\kappa_t^{i,j}\frac{\dot{\kappa}_t^{i,\ell}}{\kappa_t^{i,\ell}}}{w}^j(x^i|x_0,x_1) + \frac{\dot{\kappa}_t^{i,\ell}}{\kappa_t^{i,\ell}}\delta_{X_t}(x^i) }
    \\
    &= \sum_{j=1}^m \brac{\dot{\kappa}_t^{i,j}-\kappa_t^{i,j}\frac{\dot{\kappa}_t^{i,\ell}}{\kappa_t^{i,\ell}}} + \frac{\dot{\kappa}_t^{i,\ell}}{\kappa_t^{i,\ell}}
    \\
    &= \sum_{j=1}^m \dot{\kappa}^{i,j}_t + \frac{\dot{\kappa}_t^{i,\ell}}{\kappa_t^{i,\ell}}\parr{1-\sum_{j=1}^m \kappa_t^{i,j}} \text{\textcolor{ForestGreen}{\qquad \qquad \quad \Comment{$\sum_j \kappa_t^{i,j}=1$, and $\sum_j \dot{\kappa}_t^{i,j}=0$}}}
    \\
    &= 0.
\end{align*}
and for $x^i\ne X_t^i$ we have 
\begin{equation}\label{ea:u_t_positive}
    u_t^i(x^i,X_t|x_0,x_1) = \sum_{j=1}^m  \brac{\frac{\dot{\kappa}_t^{i,j}}{\kappa_t^{i,j}}-\frac{\dot{\kappa}_t^{i,\ell}}{\kappa_t^{i,\ell}}}\kappa_t^{i,j}{w}^j(x^i|x_0,x_1)\geq 0
\end{equation}
since $\kappa_t^{i,j}\geq 0$, $\hat{w}_t(x^i|z)\geq 0$, and $\frac{\dot{\kappa}_t^{i,j}}{\kappa_t^{i,j}}-\frac{\dot{\kappa}_t^{i,\ell}}{\kappa_t^{i,\ell}}\geq 0$ since $\ell = \argmin_{j\in[m]}\frac{\dot{\kappa}_t^{i,j}}{\kappa_t^{i,j}}$. 
Second, we show that $u_t$ satisfies the Continuity Equation (\eqref{e:ce}). 
To that end we write \eqref{e:p_t_cond_general} as
\begin{equation}\label{ae:w_ell_cond}
    w^\ell(x^i|x_0,x_1) = \frac{1}{\kappa_t^{i,\ell}}\brac{ p_t(x^i|x_0,x_1) - \sum_{j\ne \ell}\kappa_{t}^{i,j}w^j(x^i|x_0,x_1)},
\end{equation}
where $\ell = \argmin_{j\in[m]}\frac{\dot{\kappa}_t^{i,j}}{\kappa_t^{i,j}}$. Now by differentiating $p_t(x|x_0,x_1)$ we get
\begin{align*}
        p_{t}(x|x_0,x_1) &=\prod_{i=1}^N p_{t}(x^i|x_0,x_1) \\     
        \dot{p}_{t}(x|x_0,x_1) &= \sum_{i=1}^N p_t(x^{\bar{i}}|x_0,x_1)  \dot{p}_t(x^i|x_0,x_1) 
        \\
        &=  \sum_{i=1}^N p_t(x^{\bar{i}}|x_0,x_1)\brac{\sum_{j=1}^m\dot{\kappa}_t^{i,j}w^j(x^i|x_0,x_1)}\\
        &= \sum_{i=1}^N p_t(x^{\bar{i}}|x_0,x_1)\brac{\sum_{j\ne \ell}\dot{\kappa}_t^{i,j}w^j(x^i|x_0,x_1) + \dot{\kappa}_t^{i,\ell}w^\ell(x^i|x_0,x_1)} \\
        &=  \sum_{i=1}^N p_t(x^{\bar{i}}|x_0,x_1)\brac{\sum_{j=1}^m\textcolor{red}{\overbrace{\textcolor{black}{\brac{\dot{\kappa}_t^{i,j}-\kappa_t^{i,j}\frac{\dot{\kappa}_t^{i,\ell}}{\kappa_t^{i,\ell}}}}}^{a_t^{i,j}}}w^j(x^i|x_0,x_1) + \textcolor{red}{\overbrace{\textcolor{black}{\frac{\dot{\kappa}_t^{i,\ell}}{\kappa_t^{i,\ell}}}}^{b_t^{i}}}p_t(x^i|x_0,x_1)}     \text{\textcolor{ForestGreen}{\qquad \Comment{\eqref{ae:w_ell_cond}}}}
        \\
        &= \sum_{i=1}^N  \brac{\sum_{j=1}^m  a_t^{i,j}\textcolor{red}{\overbrace{\textcolor{black}{\brac{\sum_z \delta_x(z^{\bar{i}})p_t(z|x_0,x_1)}}}^{= p_t(x^{\bar{i}}|x_0,x_1)}} w^j(x^i|x_0,x_1)  
        +         b_t^{i}\textcolor{red}{\overbrace{\textcolor{black}{\brac{\sum_z \delta_x(z^{\bar{i}})\delta_x(z^i)p_t(z|x_0,x_1) }}}^{=p_t(x|x_0,x_1)}}} \\       
        &= \sum_z \textcolor{blue}{p_t(z|x_0,x_1)} \sum_{i=1}^N \delta_x(z^{\bar{i}})  \textcolor{red}{\overbrace{\textcolor{black}{\brac{\sum_{j=1}^m a_t^{i,j}w^j(x^i|x_0,x_1) 
        +        b_t^{i}\delta_x(z^i)}}}^{u_t^i(x^i,z|x_0,x_1)}} \quad \text{\textcolor{ForestGreen}{\Comment{$\delta_x(z^i)=\delta_z(x^i), \ \delta_{x}(z^{\bar{i}})=\delta_{z}(x^{\bar{i}})$}}}
        \\
        &= -\divv_x(p_t(\cdot|x_0,x_1) u_t(\cdot|x_0,x_1)),
    \end{align*}
     as required. 
     \end{proof}

  \subsection{Backward-time generating probability velocity.} \label{a:backward_time_generating}
    Here we prove the equivalent of Theorem \ref{thm:pvf_of_p_t_cond} for backward-time generating probability field. But first, let us justify the backward sampling formula,    %
    \begin{equation}\label{e:backward_sampling}
        X_{t-h}^i\sim \delta_{X_t^i}(\cdot)-hu_t^i(\cdot,X_t).        
    \end{equation}
    Similar to \eqref{e:continuity_equation_derivation} we have     
    \begin{equation*}
    \begin{aligned}
        &\E_{X_t}\prod_{i=1}^N\brac{\delta_{X_t}(x^i)-hu_t^i(x^i,X_t)} = \E_{X_t} \brac{ \delta_{X_t}(x) - h \sum_{i=1}^N \delta_{X_t}(x^{\bar{i}}) u^i_t(x^i,X_t)}+ o(h)\\
        & \qquad \quad = p_t(x) + h\divv_x(p_t u_t) + o(h) \textcolor{red}{\overset{(\ref{e:ce})}{=}} p_t(x)-h\dot{p}_t(x)+o(h) = p_{t-h}(x) + o(h).
    \end{aligned}
    \end{equation*}
Therefore if the Continuity equation holds and $-u_t$ satisfies the conditions in \eqref{e:rate_conds} then given $X_t\sim p_t$, \eqref{e:backward_sampling} provides an approximation $X_{t-h}\sim p_{t-h} + o(h)$. The change to the generating probability velocity in \eqref{e:u_t_general} to accommodate reverse time sampling is to replace the argmin in \eqref{ea:ell} with argmax,
\begin{equation}\label{ea:ell_backwards}
    \ell = \ell(i,t) \triangleq  \argmax_{j\in[m]}\brac{\dot{\kappa}_t^{i,j}/\kappa_t^{i,j}}. 
\end{equation}
An analogous result to Theorem \ref{thm:pvf_of_p_t_cond} for backward-time sampling is therefore,
\begin{theorem}[Probability velocity of conditional paths, backward time]\label{thm:pvf_of_p_t_cond_backward_time}
The probability velocity $-u_t$, where $u_t$ defined in \eqref{e:u_t_cond} with  $\ell=\argmax_{j\in [m]} \brac{\dot{\kappa}_t^{i,j}/\kappa_t^{i,j}}$ is a backward-time generating probability velocity for the conditional paths $p_t(x|x_0,x_1)$ defined in equations \ref{e:p_t} and \ref{e:p_t_cond_general}.
\end{theorem}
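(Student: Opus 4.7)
The plan is to mirror the proof of Theorem \ref{thm:pvf_of_p_t_cond} with the sign of time reversed. Observe that the backward-sampling derivation just above the theorem statement already shows: whenever $u_t$ satisfies the discrete Continuity Equation with $p_t$ and $-u_t$ satisfies the rate conditions \eqref{e:rate_conds}, the update $X_{t-h}^i\sim\delta_{X_t^i}(\cdot)-hu_t^i(\cdot,X_t)$ gives $X_{t-h}\sim p_{t-h}+o(h)$. Therefore the task reduces to verifying these two properties for $u_t$ defined by \eqref{e:u_t_cond} with $\ell=\argmax_{j\in[m]}\dot{\kappa}_t^{i,j}/\kappa_t^{i,j}$ (applied to the conditional path $p_t(x|x_0,x_1)$).

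For the Continuity Equation, I would copy verbatim the differentiation of $p_t(x|x_0,x_1)=\prod_i p_t(x^i|x_0,x_1)$ carried out in the proof of Theorem \ref{thm:pvf_of_p_t_cond}. The only place where the choice of $\ell$ entered there was in the substitution
\[
w^\ell(x^i|x_0,x_1)=\frac{1}{\kappa_t^{i,\ell}}\left[p_t(x^i|x_0,x_1)-\sum_{j\ne\ell}\kappa_t^{i,j}w^j(x^i|x_0,x_1)\right],
\]
which is valid for any index $\ell$ with $\kappa_t^{i,\ell}>0$, not specifically the argmin. The same algebraic chain therefore yields $\dot p_t(\cdot|x_0,x_1)=-\divv_x\bigl(p_t(\cdot|x_0,x_1)\,u_t(\cdot|x_0,x_1)\bigr)$ unchanged, so $u_t$ is a forward-time solution of the CE for the conditional path regardless of whether $\ell$ is chosen as argmin or argmax.

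For the rate conditions on $-u_t$, the sum-to-zero identity $\sum_{x^i}u_t^i(x^i,X_t|x_0,x_1)=0$ holds independently of which $\ell$ is chosen, since its verification only relies on $\sum_j\kappa_t^{i,j}=1$ and $\sum_j\dot{\kappa}_t^{i,j}=0$; the same identity then holds for $-u_t$. The off-diagonal expression in \eqref{ea:u_t_positive} flips sign: for $x^i\ne X_t^i$,
\[
-u_t^i(x^i,X_t|x_0,x_1)=\sum_{j=1}^m\left(\frac{\dot{\kappa}_t^{i,\ell}}{\kappa_t^{i,\ell}}-\frac{\dot{\kappa}_t^{i,j}}{\kappa_t^{i,j}}\right)\kappa_t^{i,j}w^j(x^i|x_0,x_1),
\]
which is manifestly non-negative precisely under the argmax choice of $\ell$, exactly reversing the selection criterion used in the forward-time case. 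Combining the Continuity Equation from the previous paragraph with these rate conditions closes the argument via the backward-sampling identity. I anticipate no essential obstacle here: the full computation is already present in the proof of Theorem \ref{thm:pvf_of_p_t_cond}, and the only delicate point is bookkeeping of signs so that the argmin/argmax switch lands on the correct quantity (namely, that reversing time swaps which $j$ yields the worst off-diagonal sign).
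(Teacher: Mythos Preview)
Your proposal is correct and follows essentially the same approach as the paper's own proof: verify that the sum-to-zero condition holds regardless of $\ell$, that the off-diagonal sign computation in \eqref{ea:u_t_positive} flips to $\leq 0$ under the argmax choice so that $-u_t$ satisfies \eqref{e:rate_conds}, and that the Continuity Equation derivation goes through verbatim since the substitution for $w^\ell$ is valid for any index $\ell$. You have in fact articulated the last point a bit more explicitly than the paper does.
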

\begin{proof}[Proof (Theorem \ref{thm:pvf_of_p_t_cond_backward_time})]
     We follow the proof of Theorem \ref{thm:pvf_of_p_t_cond} and indicate the relevant changes. First, for arbitrary $X_t\in \gD$,
\begin{equation}
    \sum_{x^i} u_t^i(x^i,X_t) = 0,
\end{equation}
     exactly using the same arguments as the forward-time case. Now, 
     for $x^i\ne X_t^i$ we have 
\begin{equation}
    u_t^i(x^i,X_t|x_0,x_1) = \sum_{j=1}^m  \brac{\frac{\dot{\kappa}_t^{i,j}}{\kappa_t^{i,j}}-\frac{\dot{\kappa}_t^{i,\ell}}{\kappa_t^{i,\ell}}}\kappa_t^{i,j}w_t^j(x^i|x_0,x_1)\leq 0
\end{equation}
due to $\ell$ being now the argmax of $\frac{\dot{\kappa}_t^{i,j}}{\kappa_t^{i,j}}$. Therefore $-u_t$ satisfies \eqref{e:rate_conds}. Lastly, we notice that the proof of the Continuity Equation follows through exactly the same also in this case.
\end{proof}
     
\subsection{Backward-time generating velocity for i.i.d.~source $p(x_0)$ and simple paths}\label{a:time_backward}
Here we consider the case of probability paths defined via the conditionals in \eqref{e:p_t_cond} with independent coupling $\pi(x_0,x_1)=p(x_0)q(x_1)$ and i.i.d.~source distribution $p(x_0)=\prod_{i=1}^N p(x_0^i)$, where $p(x_0^i)$ is some PMF over $[d]$. In this case one can simplify the time-backward sampling formula in \eqref{e:u_t_noise} by using the following one which is equivalent (\ie, their difference is divergence free and consequently generate the same probability path $p_t$): 
\begin{equation}\label{e:u_t_noise_no_posterior}
    \check{u}_t(x^i,X_t) = \frac{\dot{\kappa}_t}{\kappa_t}\brac{\delta_{X_t}(x^i) - p(x^i)}.
\end{equation}
The benefit in this equation is that it does not require the posterior $p_{0|t}$, which needs to be learned in general cases. 

To show that \eqref{e:u_t_noise_no_posterior} is indeed a generating probability velocity it is enough to show that  
\begin{equation}    \divv_x\brac{p_t\parr{\check{u}_t-\check{u}^\star_t}} = 0,
\end{equation}
where $\check{u}^\star_t$ is the probability velocity in \eqref{e:u_t_noise}.  Let us verify using \eqref{e:div_explicit}:
\begin{align*}
\divv_x\brac{p_t\parr{\check{u}_t - \check{u}^\star_t}} &= 
\sum_{i,z} p_t(z) \delta_{z}(x^{\bar{i}}) \brac{p(x^i) - \sum_{x_0,x_1}\delta_{x_0}(x^i)\frac{p_t(z|x_0,x_1)p(x_0)q(x_1)}{p_t(z)}}
\text{\textcolor{ForestGreen}{\Comment{$\pi(x_0,x_1)=p(x_0)q(x_1)$}}}\\
&= \sum_{i,z} \delta_z(x^{\bar{i}}) \brac{p(x^i)p_t(z) - \sum_{x_0,x_1}\delta_{x_0}(x^i)p_t(z|x_0,x_1)p(x_0)q(x_1)} \\
&= \sum_{i,x_0,x_1} \brac{ p(x^i)-\delta_{x_0}(x^i)}\parr{\sum_z \delta_{z}(x^{\bar{i}})p_t(z|x_0,x_1)}p(x_0)q(x_1) \\
&= \sum_{i,x_0,x_1} \brac{ p(x^i)-\delta_{x_0}(x^i)}p_t(x^{\bar{i}}|x_0,x_1)p(x^{\bar{i}}_0)p(x_0^i)q(x_1) 
\\
&= \sum_{i,x_0^{\bar{i}},x_1} \parr{\sum_{x_0^i}\brac{p(x^i)p(x_0^i) - \delta_{x_0}(x^i)p(x_0^i)}}p_t(x^{\bar{i}}|x_0,x_1)p(x^{\bar{i}}_0)q(x_1) 
\\ &=0,
\end{align*}
where in the second to last equality we used the fact that the paths we are considering have the form: $p_t(x^{\bar{i}}|x_0,x_1)=\prod_{j\in [N]\setminus i}\brac{\kappa_t \delta_{x_1}(x^j) + (1-\kappa_t)\delta_{x_0}(x^j)}$, and therefore do not depend on the $i$-th source token, $x_0^i$.

\subsection{Corrector steps}\label{a:corrector}

\begin{reptheorem}{thm:corrector}
    For perfectly trained posteriors and $\alpha_t,\beta_t>0$, $t\in (0,1)$, $\bar{u}_t$ in \eqref{e:u_t_corrector} is a probability velocity, \ie, satisfies \eqref{e:rate_conds}, and: (i) For $\alpha_t-\beta_t = 1$, $\bar{u}_t$ provides a probability velocity generating $p_t$; (ii) For $\alpha_t-\beta_t=0$, repeatedly sampling with $\bar{u}_t$ at fixed $t\in(0,1)$ and sufficiently small $h$ is guaranteed to converge to a sample from $p_t$.
\end{reptheorem}
\begin{proof}[Proof (Theorem \ref{thm:corrector}).] 
    First let us write explicitly $\bar{u}_t$ from \eqref{e:u_t_corrector}:
    \begin{align}   \nonumber 
    \bar{u}_t^i(x^i,X_t) &= \alpha_t \hat{u}_t^i(x^i,X_t) - \beta_t \check{u}_t^i(x^i,X_t) \\ \label{ea:u_t_corrector} &= \textcolor{black}{\frac{\alpha_t \dot{\kappa}_t}{1-\kappa_t}p_{1|t}(x^i|X_t) + \frac{\beta_t \dot{\kappa}_t}{\kappa_t}p_{0|1}(x^i|X_t) - \brac{\frac{\alpha_t \dot{\kappa}_t}{1-\kappa_t} + \frac{\beta_t \dot{\kappa}_t}{\kappa_t}}\delta_{X_t}(x^i)}.
\end{align}
Since \eqref{ea:u_t_corrector} is a sum of PMFs with coefficients that sum up to zero the first condition in \eqref{e:rate_conds}, \ie,  $\sum_{x^i}\bar{u}^i_t(x^i,X_t)=0$ holds. The second condition in \eqref{e:rate_conds} holds since for $t\in (0,1)$ we have $\frac{\alpha_t \dot{\kappa}_t}{1-\kappa_t}, \frac{\beta_t \dot{\kappa}_t}{\kappa_t} \geq 0$.
Now, 
\begin{align}\nonumber
    \divv_x(p_t\bar{u}_t) &= \alpha_t\divv_x(p_t \hat{u}_t) - \beta_t \divv_x(p_t\check{u}_t) \qquad \text{\textcolor{ForestGreen}{\Comment{linearity of $\divv$}}}
    \\ \nonumber &= -\alpha_t \dot{p}_t(x)  + \beta_t \dot{p}_t(x)  \qquad \qquad \quad \ \ \ \ \ \ 
    \text{\textcolor{ForestGreen}{\Comment{Equation \ref{e:ce}}}}
    \\ \label{ea:div_alpha_beta} &=-(\alpha_t-\beta_t)\dot{p}_t(x). 
\end{align}

\textbf{For (i):} Using \eqref{ea:div_alpha_beta} with $\alpha_t-\beta_t=1$ we get that 
\begin{equation*}
    \divv_x(p_t \bar{u}_t) = -\dot{p}_t(x),
\end{equation*}
\ie, $\bar{u}_t$ satisfies the continuity equation and therefore generates $p_t$. 

\textbf{For (ii):} Setting $\alpha_t-\beta_t=0$ in \eqref{ea:div_alpha_beta} we get $\divv_x(p_t\bar{u}_t)=0$ and therefore similar to \eqref{e:continuity_equation_derivation} we have
\begin{align}\nonumber
p_t(x) &= p_t(x) - h\divv_x(p_t \bar{u}_t) 
\\ \nonumber 
&= \E_{X_t} \brac{ \delta_{X_t}(x) + h \sum_{i=1}^N \delta_{X_t}(x^{\bar{i}}) \bar{u}^i_t(x^i,X_t)} 
 \\ \label{e:stationary}
    &= \sum_z p(x|z) p_t(z),
\end{align}
where using \eqref{ea:u_t_corrector} we have
\begin{align*}
    p(x|z) &= h\sum_{i=1}^N \frac{\alpha_t \dot{\kappa}_t}{1-\kappa_t}\textcolor{red}{\delta_z(x^{\bar{i}})p_{1|t}(x^i|z)} + 
    h \sum_{i=1}^N \frac{\beta_t  \dot{\kappa}_t}{\kappa_t} \textcolor{red}{\delta_z(x^{\bar{i}}) p_{0|1}(x^i|z)} \\ &+ \parr{1-h\sum_{i=1}^N\brac{\frac{\alpha_t \dot{\kappa}_t}{1-\kappa_t} + \frac{\beta_t \dot{\kappa}_t}{\kappa_t}}}\textcolor{red}{\delta_{z}(x^i)}.
\end{align*}
For sufficiently small $h>0$ therefore $p(x|z)$ is a convex combination of PMFs (in red) $x$ and consequently is itself a PMF in $x$, that is $p(x|z)$ is a probability transition matrix, and $p_t(x)$ is its stationary distribution, \ie, it is an  eigenvector of $p(x|z)$ with eigenvalue $1$, which is maximal. To prove convergence of the iterations in \eqref{e:stationary} we are left with showing that $p(x|z)$ is irreducible and a-periodic, see \citet{norris1998markov} (Theorem 1.8.3). Irreducibly of $p(x|z)$ can be shown by connecting each two states $z,z'$ by changing one token at a time, and assuming that $p_{1|t}$ or $p_{0|t}$ are strictly positive (which is usually the case since as at-least one of them is defined as soft-max of finite logits); a-periodicity is proved by showing $p(x|x)>0$ which is true as the coefficient of $\delta_z(x)$ is greater than zero for sufficiently small $h>0$. Lastly, note that the iteration in \eqref{e:stationary} changes one token at a time. An approximation to this sampling can be achieved using our standard parallel sampling via \eqref{e:discrete_sampling}, justified by \eqref{e:continuity_equation_derivation}. 

\end{proof}

\subsection{Training}
\label{a:training}
\begin{repproposition}{prop:training}
    The minimizer of $\gL$ (\eqref{e:loss}) is $\hat{w}_t^j(x^i|X_t)$ (\eqref{e:posterior_w}).
\end{repproposition}
\begin{proof}[Proof (Proposition \ref{prop:training}).]
It is enough to prove the claim for $m=1$, with a single $w(x^i|x_0,x_1)$.
\begin{align*}
    \gL(\theta) &= -\frac{1}{N}\sum_{i=1}^N\E_t \sum_{x_0,x_1,z,y^i} \log \hat{w}_t(y^i|z;\theta) w(y^i|x_0,x_1)p_t(z|x_0,x_1)\pi(x_0,x_1)\\
    &= -\frac{1}{N}\sum_{i=1}^N\E_t\sum_z \textcolor{blue}{p_t(z)} \brac{\sum_{y^i} \log \hat{w}_t(y^i|z;\theta) \parr{\sum_{x_0,x_1} w(y^i|x_0,x_1)\frac{p_t(z|x_0,x_1)\pi(x_0,x_1)}{\textcolor{blue}{p_t(z)}}}} \\
    &= -\E_{t,X_t}\frac{1}{N}\sum_{i=1}^N \brac{\sum_{y^i} \log \hat{w}_t(y^i|X_t;\theta) \hat{w}_t(y^i|X_t)},
\end{align*}
    that amounts to minimizing the Cross Entropy loss between $\hat{w}_t(x^i|X_t;\theta)$ and $\hat{w}_t(x^i|X_t)$ for all $i\in [N]$, the minimizer of which satisfies $\hat{w}_t(x^i|X_t;\theta)\equiv \hat{w}_t(x^i|X_t)$.
\end{proof}

\subsection{Time-independent posterior for masked modeling}\label{a:time_independent}
\begin{repproposition}{prop:time_independence}
    For paths defined by equations \ref{e:p_t} and \ref{e:p_t_cond} with source $p(x)=\delta_\dummy(x)$ the posterior $p_t(x_0,x_1|z)=p(x_0,x_1|z)$ is time-independent. Consequently, the probability denoiser $p_{1|t}(x^i|z)=p_1(x^i|z)$ is also time-independent. 
\end{repproposition}  
\begin{proof}[Proof (Proposition \ref{prop:time_independence}).] 
 First, 
\begin{equation*}
 p_t(z^i|x_0,x_1) = (1-\kappa_t)\delta_{\dummy}(z^i) +  \kappa_t\delta_{x_1}(z^i) = \begin{cases}
     (1-\kappa_t) & z^i = \dummy\\
     \kappa_t\delta_{x_1}(z^i) & z^i \ne \dummy
 \end{cases}  
\end{equation*}
and therefore
\begin{equation*}
    p_t(z|x_0,x_1) = \brac{\prod_{i: z^i=\dummy }(1-\kappa_t) \prod_{i: z^i\ne \dummy }\kappa_t} \prod_{i: z^i\ne \dummy }\delta_{x_1}(z^i).
\end{equation*}
The posterior now gives
\begin{align*}
        \frac{p_t(z|x_0,x_1)\pi(x_0,x_1)}{p_t(z)} &= \frac{\brac{\prod_{i=1}^N p_t(z^i|x_0,x_1)}\pi(x_0,x_1)}{\sum_{\tilde{x}_0,\tilde{x}_1} \brac{\prod_{j=1}^N p_t(z^j|\tilde{x}_0,\tilde{x}_1) }\pi(\tilde{x}_0,\tilde{x}_1)} \\
        &= \frac{\textcolor{red}{\cancel{\textcolor{black}{\brac{\prod_{i: z^i=\dummy }(1-\kappa_t) \prod_{i: z^i\ne \dummy }\kappa_t}}}} \brac{\prod_{i: z^i\ne \dummy }\delta_{x_1}(z^i)} \pi(x_0,x_1)}{\sum_{\tilde{x}_0,\tilde{x}_1} \textcolor{red}{\cancel{\textcolor{black}{\brac{\prod_{j: z^j=\dummy }(1-\kappa_t) \prod_{j: z^j\ne \dummy }\kappa_t}}}} \brac{\prod_{j: z^j\ne \dummy }\delta_{\tilde{x}_1}(z^j) }\pi(\tilde{x}_0,\tilde{x}_1)}\\
        &= p(x_0,x_1|z).
    \end{align*}
    showing that the posterior is time-independent for dummy source distributions and convex paths.  Consequently also the probability denoiser, 
    \begin{align*}
    p_{1|t}(x^i|z) &= \sum_{x_0,x_1}\delta_{x_1}(x^i)\frac{p_t(z|x_0,x_1)\pi(x_0,x_1)}{p_t(z)} = \sum_{x_0,x_1} \delta_{x_1}(x^i) p(x_0,x_1|z),
    \end{align*}
    is time-independent.
    \end{proof}

\subsection{Continuous Flow Matching}\label{a:continuous_fm}
For completeness we provide the formulas for denoiser ($x$-prediction) and noise-prediction ($\eps$-prediction) parameterizations of the generating velocity field $u:[0,1]\times \Real^N \too \Real^N$ appearing in~\Cref{tab:discrete_and_continuous}.

In Continuous Flow Matching one can chose several ways to define the probability paths~\citep{lipman2022flow,liu2022flow,albergo2022building,pooladian2023multisample,tong2023improving}: 
\begin{align}
    p_t(x) &= \int p_t(x|x_0,x_1)\pi(x_0,x_1)dx_0 dx_1 \\
    &= \int p_{1|t}(x|x_1) q(x_1) dx_1 \\
    &= \int p_{0|t}(x|x_0) p(x_0) dx_0.
\end{align}

\paragraph{\textbf{Denoiser parameterization.}} The conditional generating velocity field $u_t(x|x_1)$ for $p_t(x|x_1)$, \ie, satisfy the Continuity Equation \ref{e:ce}, takes the form~\citep{lipman2022flow}
\begin{equation}
    u_t(x|x_1) = \frac{\dot{\kappa}_t}{1-\kappa_t}(x_1-x),
\end{equation}
and the marginal generating velocity field is therefore given by the marginalization with the posterior $p_t(x_1|x)$,
\begin{align*}
    u_t(x) &= \int \frac{\dot{\kappa}_t}{1-\kappa_t}(x_1-x)\frac{p_{1|t}(x|x_1)q(x_1)}{p_t(x)}dx_1 \\
    &=  \frac{\dot{\kappa}_t}{1-\kappa_t}\brac{\hat{x}_{1|t}(x) - x},
\end{align*}
where 
\begin{equation}
    \hat{x}_{1|t}(x) = \int x_1 \frac{p_{1|t}(x|x_1)q(x_1)}{p_t(x)}dx_1 =\E_{X_1\sim p_t(\cdot|x)} X_1.
\end{equation}
This shows the continuous Flow Matching denoiser parameterization of the generating velocity field in~\Cref{tab:discrete_and_continuous}.

\paragraph{\textbf{Noise-prediction parameterization.}} The conditional generating velocity field for $p_t(x|x_0)$  takes the form
\begin{equation}
    u_t(x|x_0) = \frac{\dot{\kappa}_t}{\kappa_t}(x-x_0),
\end{equation}
and the marginal generating velocity field in this case is given by marginalization with the posterior $p_t(x_0|x)$,
\begin{align*}
    u_t(x) &= \int \frac{\dot{\kappa}_t}{\kappa_t}(x-x_0)\frac{p_{0|t}(x|x_0)p(x_0)}{p_t(x)}dx_0 \\
    &=  \frac{\dot{\kappa}_t}{\kappa_t}\brac{x-\hat{x}_{0|t}(x)},
\end{align*}
where 
\begin{equation}
    \hat{x}_{0|t}(x) = \int x_0 \frac{p_{0|t}(x|x_0)p(x_0)}{p_t(x)}dx_0 =\E_{X_0\sim p_t(\cdot|x)} X_0.
\end{equation}
This shows the continuous Flow Matching noise-prediction parameterization of the generating velocity field in~\Cref{tab:discrete_and_continuous}.

\subsection{Scheduler change formula}
\begin{proposition}\label{prop:sched_change_post}
    Assume a conditional probability path as in \eqref{e:p_t_cond}, then for any two schedulers $\kappa_t, \kappa'_t$, and $\hat{w}_t(x^i|z), \hat{w}'_t(x^i|z)$ their corresponding posteriors as in \eqref{e:posterior_w}, 
    \begin{equation}
        \hat{w}_{t'}(x^i|z) = \hat{w}_{t}'(x^i|z), 
    \end{equation}
    where $t' = \kappa^{-1}_{\kappa'_t}$, and $\kappa^{-1}$ is the inverse of $\kappa$.
\end{proposition}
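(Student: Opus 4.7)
The plan is to isolate the dependence on $t$ in the posterior and show that it enters only through the value of the scheduler $\kappa_t$. Once that is established, the claim reduces to a reparameterization identity.

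First, I would note that the conditional path in \eqref{e:p_t_cond} has the form $p_t(x^i|x_0,x_1) = (1-\kappa_t)\delta_{x_0}(x^i) + \kappa_t\delta_{x_1}(x^i)$, so the joint conditional $p_t(z|x_0,x_1) = \prod_i p_t(z^i|x_0,x_1)$ is a function of $(z,x_0,x_1)$ and $\kappa_t$ alone. Define
\begin{equation}
\Phi_\kappa(z|x_0,x_1) \defe \prod_{i=1}^N\bigl[(1-\kappa)\delta_{x_0}(z^i) + \kappa\,\delta_{x_1}(z^i)\bigr],
\end{equation}
so that $p_t(z|x_0,x_1) = \Phi_{\kappa_t}(z|x_0,x_1)$ and, for the alternative scheduler, $p'_t(z|x_0,x_1) = \Phi_{\kappa'_t}(z|x_0,x_1)$.

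Next, by Bayes' rule \eqref{e:p_t_x0_x1_given_Xt} the marginal $p_t(z) = \sum_{x_0,x_1}\Phi_{\kappa_t}(z|x_0,x_1)\pi(x_0,x_1)$ and the posterior $p_t(x_0,x_1|z)$ are both functions of $\kappa_t$ only (with $z,x_0,x_1$ fixed). Plugging into the definition of $\hat{w}_t$ in \eqref{e:posterior_w} — noting that $w(x^i|x_0,x_1)$ is $t$-independent — I obtain a representation
\begin{equation}
\hat{w}_t(x^i|z) = \Psi_{\kappa_t}(x^i|z), \qquad \hat{w}'_t(x^i|z) = \Psi_{\kappa'_t}(x^i|z),
\end{equation}
where $\Psi_\kappa$ is a single function that does \emph{not} depend on which scheduler generated $\kappa$. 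This is the crux of the argument, and it relies only on the fact that the family $\{p_t(\cdot|x_0,x_1)\}$ in \eqref{e:p_t_cond} is parameterized by a scalar $\kappa_t$.

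Finally, substituting $t' = \kappa^{-1}(\kappa'_t)$ — which by definition means $\kappa_{t'} = \kappa'_t$ — gives
\begin{equation}
\hat{w}_{t'}(x^i|z) = \Psi_{\kappa_{t'}}(x^i|z) = \Psi_{\kappa'_t}(x^i|z) = \hat{w}'_t(x^i|z),
\end{equation}
which is the claim. There is no real obstacle: the only subtle point is verifying that $t$ enters the posterior exclusively via $\kappa_t$, which is immediate once the conditional path is written in the form \eqref{e:p_t_cond}. Implicit assumptions are that $\kappa_t$ is strictly monotone so that $\kappa^{-1}$ exists on the relevant range, and that $\kappa'_t$ lies in the image of $\kappa$ so that $t'$ is well-defined.
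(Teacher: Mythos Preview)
Your proposal is correct and follows essentially the same approach as the paper: both arguments reduce to the observation that the conditional path, the marginal, and hence the posterior depend on $t$ only through the scalar $\kappa_t$, so setting $t'=\kappa^{-1}(\kappa'_t)$ (i.e., $\kappa_{t'}=\kappa'_t$) forces the two posteriors to coincide. The paper carries this out by explicitly verifying $p_{t'}(\cdot|x_0,x_1)=p'_t(\cdot|x_0,x_1)$, then $p_{t'}=p'_t$, and finally equality of the posteriors, whereas you package the same chain into the single functions $\Phi_\kappa$ and $\Psi_\kappa$---a slightly more abstract but logically identical route.
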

\begin{proof}[Proof (Proposition \ref{prop:sched_change_post}).]
For a conditional probability path as in \eqref{e:p_t_cond},
\begin{align}
    p_{t'}(x^i|x_0,x_1) & = \prod_{i=1}^N p_{t'}(x^i | x_0,x_1)\\
    &= \prod_{i=1}^N\brac{(1-\kappa_{t'})\delta_{x_0}(x^i) + \kappa_{t'} \delta_{x_1}(x^i)}\\
    &= \prod_{i=1}^N\brac{(1-\kappa'_t)\delta_{x_0}(x^i) + \kappa'_t \delta_{x_1}(x^i)}\\
    &=\prod_{i=1}^N p'_{t}(x^i | x_0,x_1)\\
    &= p'_{t}(x^i|x_0,x_1),
\end{align}
where in the 3rd equality we used $\kappa_{t'}=\kappa'_t$. Thus, also for the marginal probability path as in \eqref{e:p_t},
\begin{align}
    p_{t'}(x) &= \sum_{x_0,x_1\in \gD} p_{t'}(x|x_0,x_1)\pi(x_0,x_1)\\
    &= \sum_{x_0,x_1\in \gD} p'_t(x|x_0,x_1)\pi(x_0,x_1)\\
    & = p'_t(x),
\end{align}
where in the 2nd equality we used $p_{t'}(x|x_0,x_1)=p'_{t}(x|x_0,x_1)$. Finally the change of scheduler for a posterior as defined in \eqref{e:posterior_w},
\begin{align}
    \hat{w}_{t'}(x^i | z) &= \sum_{x_0,x_1\in \gD} w(x^i|x_0,x_1) p_{t'}(x_0,x_1|z)\\
    &= \sum_{x_0,x_1\in \gD} w(x^i|x_0,x_1) \frac{p_{t'}(z|x_0,x_1)\pi(x_0,x_1)}{p_{t'}(z)}\\
    &= \sum_{x_0,x_1\in \gD} w(x^i|x_0,x_1) \frac{p'_{t}(z|x_0,x_1)\pi(x_0,x_1)}{p'_{t}(z)}\\
    &= \sum_{x_0,x_1\in \gD} w(x^i|x_0,x_1) p'_{t}(x_0,x_1|z)\\
    &= \hat{w}'_{t}(x^i | z)
\end{align}
where in the 3rd equality we used both $p_{t'}(z|x_0,x_1)=p'_{t}(z|x_0,x_1)$ and $p_{t'}(z)=p'_{t}(z)$.
\end{proof}

\section{Inference time}\label{app:latency}
One potential benefit of non-autoregressive decoding is improved latency due to a significantly lower number of decoding steps. To demonstrate that, we measure the average latency of the proposed method compared with the autoregressive alternative using a single A100 GPU with $80$ GB of RAM. We calculate the average latency time on the HumanEval benchmark using a batch size of 1. When considering 256 NFEs, the proposed method was found to be $\sim$2.5x faster than the autoregressive model (19.97 vs. 50.94 seconds on average per example). However, when considering 512 NFEs, both methods reach roughly the same latency. These results make sense as the number of tokens in most of the examples in HumanEval are below 512. Notice, that these results analyze latency and not model throughput. Due to the kv-caching mechanism following the autoregressive approach will result in significantly better throughput compared to the proposed approach~\cite{ziv2024masked}. We leave the construction of a kv-cache mechanism to the proposed approach for future research.

\section{Experimental setup}\label{app:experimental_setup}

\subsection{Text}

\paragraph{\textbf{Data.}} We use three splits of data. First is OpenWebText~\citep{Gokaslan2019OpenWeb}. Second is the same mix used in Llama-2~\citep{touvron2023llama}, including textual and code data. For the code-focused models we use the same split used in CodeLlama~\citep{roziere2023code}. For the small models, we use OpenWebText. For the big models we use the Llama-2 and CodeLlama mixes.

\paragraph{\textbf{Models.}} We train two sizes of models: small (150M parameters) and large (1.7B parameters). For the small model we used a DiT transformer architecture~\citep{Peebles2022DiT} with 12 layers, 12 attention heads, and hidden dimension of 768. We also used GPT2 tokenizer. The small models were trained on OpenWebText. For the large model, we use also used a DiT transformer architecture but with 48 layers, 24 attention heads, and hidden dimension of 1536~\citep{Peebles2022DiT}. For these models we used a tiktoken tokenizer. The large models were trained on the Llama-2 mix and the CodeLlama mix.
For both models we used ROPE~\citep{su2024roformer} embedding with $\theta=10000$. Models are trained with Adam optimizer with $\beta_1=0.9$ and $\beta_2=0.999$. We use dropout rate of 0.1. Models are trained with a warm-up of 2500 steps, with a peak learning rate of 3e-4. We train the big models with batch size of 4096 for 1.3 million iterations and the big models with batch size of 512 for 400 thousand iterations.

\paragraph{\textbf{Entropy metric.}} We report the entropy of tokens within a sequence, averaged over all generated sequences. This intuitively quantifies the diversity of tokens within a given sequence. It's important to note that when computing sequence entropy, tokens not present in the sequence are excluded from consideration.

\paragraph{\textbf{Generative perplexity metric.}} The generative perplexity metric is the average likelihood of generated text evaluated with a second (usually stronger) model. We report the generative perplexity when averaged over 1000 samples.

\paragraph{\textbf{Double precision sampling.}} \citet{zheng2024masked} demonstrated that sampling from a high-dimensional distribution with full precision can lead to a similar affect as sampling with temperature. We evaluate our model using a categorical sampler in double precision. \Cref{tab:app:double_comparison} presents the results of baselines compared to our method.

\begin{table}[h!]
  \small
  \resizebox{\columnwidth}{!}{%
  \begin{NiceTabular}{lcccccc}
  \CodeBefore
    \cellcolor{redentropy}{4-6,6-6}
    \rowcolor{secondbest}{9-10}
    \Body
  \toprule
    \textsc{Method}     
    & \textsc{NFE}  &\textsc{Llama-2}$\downarrow$ &\textsc{Llama-3}$\downarrow$ &\textsc{GPT2}$\downarrow$ & \textsc{Entropy} \\
    \toprule
    Data & - & 7.0 & 9.4 & 14.7 &7.7 \\
    \midrule
    Autoregressive & 1024 & 31.4 & 54.8 
    & 45.3 & 7.1  \\
    \citet{savinov2021step}&200& 29.5&45.1&34.7&5.2\\
    \citet{austin2021structured}&1000 & 697.6 & 768.8 & 837.8 & 7.6 \\
    \citet{han2022ssd}&$>$10000 & 73.3 & 203.1 & 99.2 & 4.8 \\
    \citet{lou2023discrete}&\slashNumbers{256}{512}{1024}&\slashNumbers{56.6}{54.0}{56.1}&\slashNumbers{122.1}{115.7}{117.7}&\slashNumbers{115.0}{107.8}{109.5}&\slashNumbers{8.1}{8.1}{8.1}\\
    \citet{campbell2024generative}&\slashNumbers{256}{512}{1024}&\slashNumbers{52.0}{54.6}{50.9} & \slashNumbers{106.0}{114.1}{102.9} & \slashNumbers{102.6}{107.1}{103.4} & \slashNumbers{8.0}{8.1}{8.0}\\
    \textbf{\method~(\eqref{e:p_t_cond})}&\slashNumbers{256}{512}{1024}&\slashNumbers{51.3}{53.3}{50.1} & \slashNumbers{104.0}{115.0}{101.3} & \slashNumbers{100.8}{107.4}{97.5} & \slashNumbers{8.0}{8.1}{8.0}\\
    \textbf{\method~(\eqref{e:p_t_3_convex})}&\slashNumbers{256}{512}{1024}&\slashNumbers{51.9}{52.7}{50.0} & \slashNumbers{104.7}{113.9}{100.5} & \slashNumbers{99.2}{105.1}{95.8} & \slashNumbers{8.0}{8.1}{8.0}\\
    \bottomrule
  \end{NiceTabular}}
  \caption{\textbf{Double precision sampling.} Generative perplexity on unconditional text generation compared to prior work. All models are sampled without the use of temperature or corrector steps.
  }
  \label{tab:app:double_comparison}
\end{table}

\subsection{Image} 

\paragraph{\textbf{Models.}} For all our experiments on CIFAR10 we use the U-Net architecture as in \citet{dhariwal2021diffusion}, with following three changes to make it fully discrete and time independent (as we used mask modeling): (i) We replace the first layer with an embedding table of size $257\times96$, and we stack the channel features such that the input to the U-Net is of shape $288\times32\times32$. (ii) We enlarge the size of the final layer to output a tensor of shape $3\times32\times32\times257$. (iii) We remove the time dependency from architecture. The hyper-parameters of the architecture: channels 96 , depth 5, channels multiple [3,4,4], heads channels 64, attention resolution 16, dropout 0.4, which gives a total parameters count of 113M. We optimize the network using Adam optimizer with $\beta_1=0.9$ and $\beta_2=0.999$, a learning rate of 1e-4. We trained with an effective batch size pf 512 for roughly 300K iterations.

\paragraph{\textbf{Scheduler ablation.}}~\Cref{tab:cifar10_sched_abl} shows FID of our method with four different schedulers: Linear, Quadratic, Cubic, Cosine, both for training and evaluation. That is, for each scheduler we trained a model and evaluate FID with all four schedulers. We observe a high variance in FID between different schedulers, with the Cubic scheduler generally performing the best on both training and evaluation.

\begin{figure}[h!]
    \centering
    \includegraphics[width=0.6\textwidth]{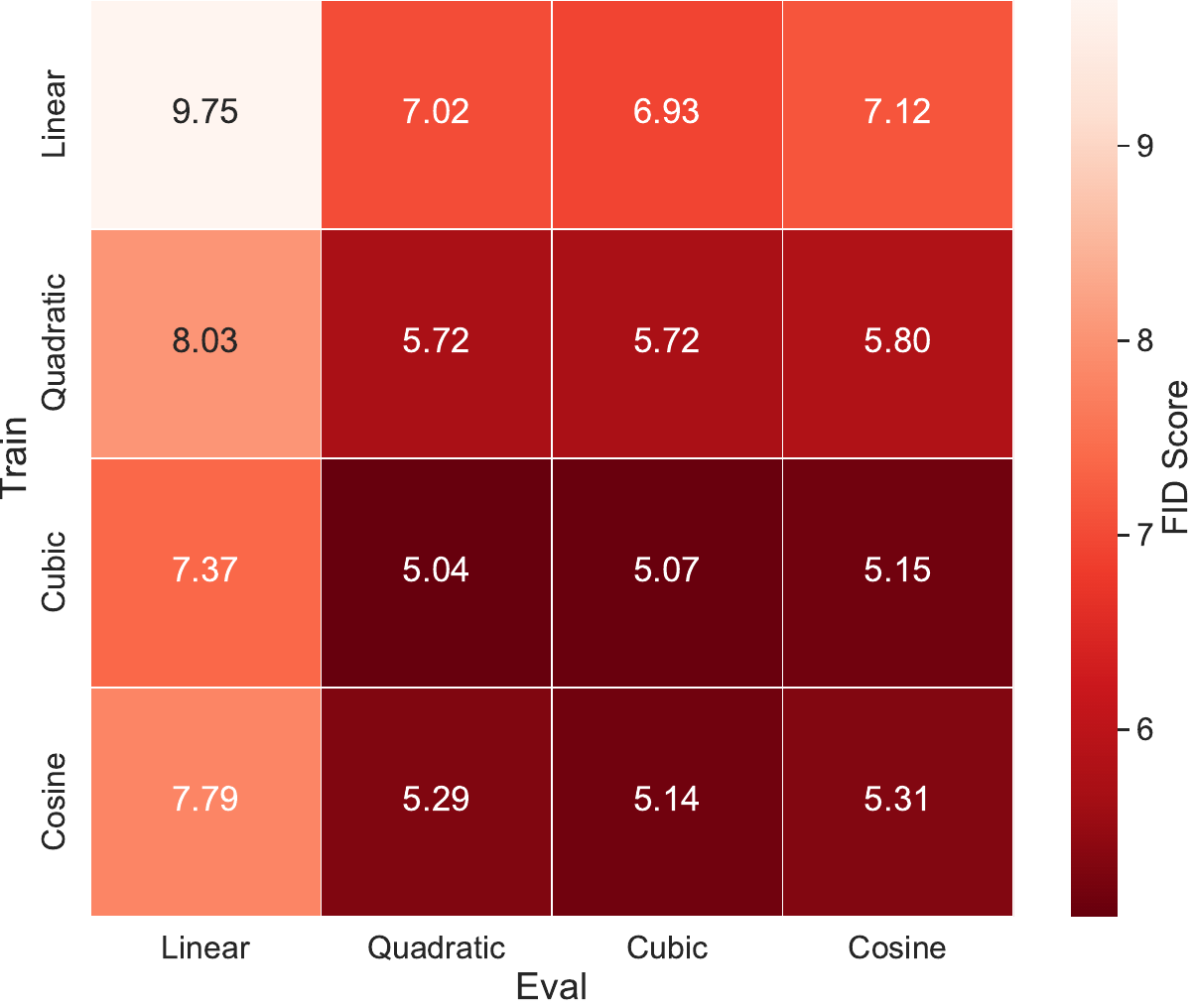}
    \caption{Comparison of FID on CIFAR10 with four schedulers: Linear, Quadratic, Cubic, Cosine, for both train and evaluation. Corrector sampling is not used in this experiment (\ie, $\alpha_t=0$ in \eqref{e:u_t_corrector}), and temperature is set to 1.}\label{tab:cifar10_sched_abl}
\end{figure}

\paragraph{\textbf{Comparison with baselines.}}
In the following, we provide implementation details for producing~\Cref{fig:cifar10}, that compares our schedulers and sampling algorithm with those employed by previous works.

\paragraph{\textbf{Cubic Scheduler (Ours).}}
For the Cubic scheduler we set the corrector scheduler as above to,
\begin{equation}
    \alpha_t = 1 + \alpha t^a(1-t)^b, \quad \beta_t = \alpha_t - 1,
\end{equation}
and we search over the parameters $a,b \in \set{0, 0.25, 0.5, 1, 2, 2.5 ,3}$, and $\alpha \in \set{6,8,10,12,14}$. Additionally, we search over the temperature $\in \set{1,0.9,0.8}$. We find that $a=2$, $b=0.25$, $\alpha=12$ give best FID.
\paragraph{\textbf{Linear Scheduler~\citep{campbell2024generative}.}} For the linear scheduler we search over two additional hyper-parameters of the method: (i) For corrector scheduler as in \eqref{e:u_t_corrector}, we set $\alpha_t=1+t\eta$, $\beta_t = \alpha_t-1$, where $\eta$ is the stochasticity parameter as in \citet{campbell2024generative}, and search over $\eta \in \set{0,1,2,5,10,15}$. (ii) We search over temperature in $\set{1,0.9,0.8}$. Finally, we find that the best FID is a achieved by $\eta=10$ and temperature $0.9$.

\paragraph{\textbf{MaskGIT~\citep{chang2022maskgit}.}} For the MaskGIT we train and sample with the Cosine scheduler $\kappa(t)=1-\cos\parr{\frac{\pi}{2}t}$ which is reported to achieve best results by \citet{chang2022maskgit}. For sampling we adjust the code from the re-implementation of~\citet{besnier2023MaskGit_pytorch}. In addition, we also search over the temperature in $\set{1,0.9,0.8,0.7,0.6,0.5}$, and we find the best FID is achieved by temperature 1.

\newpage
\clearpage

\section{Code generation - qualitative examples}\label{app:qual_code}

\subsection{Success cases}

\begin{figure}[h]
\footnotesize
\begin{minted}[bgcolor=LightGray]
{python}
from typing import List


def below_zero(operations: List[int]) -> bool:
    """ You're given a list of deposit and withdrawal operations 
    on a bank account that starts with
    zero balance. Your task is to detect if at any point the balance 
    of account fallls below zero, and
    at that point function should return True. 
    Otherwise it should return False.
    >>> below_zero([1, 2, 3])
    False
    >>> below_zero([1, 2, -4, 5])
    True
    """
\end{minted}
\vspace{-20pt}
\begin{minted}[bgcolor=pastelyellow]{python}
    balance = 0
    for operation in operations:
        if balance + operation < 0:
            return True
        balance = balance + operation
    return False
\end{minted}
\end{figure}

\begin{figure}[h]
\footnotesize
\begin{minted}[bgcolor=LightGray]
{python}
def encode_shift(s: str):
    """
    returns encoded string by shifting every character by 5 in the alphabet.
    """
    return "".join([chr(((ord(ch) + 5 - ord("a")) %


def decode_shift(s: str):
    """
    takes as input string encoded with encode_shift function. 
    Returns decoded string.
    """
\end{minted}
\vspace{-20pt}
\begin{minted}[bgcolor=pastelyellow]{python}
    decoded = []
    for ch in s:
        decoded.append(chr(((ord(ch) - 5 - ord("a")) %
    return "".join(decoded)
\end{minted}
\end{figure}

\begin{figure}[h]
\footnotesize
\begin{minted}[bgcolor=LightGray]
{python}
from typing import List, Any


def filter_integers(values: List[Any]) -> List[int]:
    """ Filter given list of any python values only for integers
    >>> filter_integers(['a', 3.14, 5])
    [5]
    >>> filter_integers([1, 2, 3, 'abc', {}, []])
    [1, 2, 3]
    """
\end{minted}
\vspace{-20pt}
\begin{minted}[bgcolor=pastelyellow]{python}
    return [value for value in values if isinstance(value, int)]
\end{minted}
\end{figure}

\begin{figure}[h]
\footnotesize
\begin{minted}[bgcolor=LightGray]
{python}
def common(l1: list, l2: list):
    """Return sorted unique common elements for two lists.
    >>> common([1, 4, 3, 34, 653, 2, 5], [5, 7, 1, 5, 9, 653, 121])
    [1, 5, 653]
    >>> common([5, 3, 2, 8], [3, 2])
    [2, 3]

    """
\end{minted}
\vspace{-20pt}
\begin{minted}[bgcolor=pastelyellow]{python}
    return sorted(set(l1) & set(l2))
\end{minted}
\end{figure}

\begin{figure}[h]
\footnotesize
\begin{minted}[bgcolor=LightGray]
{python}
def fibfib(n: int):
    """The FibFib number sequence is a sequence similar to 
    the Fibbonacci sequnece that's defined as follows:
    fibfib(0) == 0
    fibfib(1) == 0
    fibfib(2) == 1
    fibfib(n) == fibfib(n-1) + fibfib(n-2) + fibfib(n-3).
    Please write a function to efficiently compute the 
    n-th element of the fibfib number sequence.
    >>> fibfib(1)
    0
    >>> fibfib(5)
    4
    >>> fibfib(8)
    24
    """
\end{minted}
\vspace{-20pt}
\begin{minted}[bgcolor=pastelyellow]{python}
    if n == 0:
        return 0
    elif n == 1:
        return 0
    elif n == 2:
        return 1
    else:
        return fibfib(n-1) + fibfib(n-2) + fibfib(n-3)
\end{minted}
\end{figure}

\newpage
\clearpage

\newpage
\clearpage

\subsection{Failure cases}

\begin{figure}[h]
\footnotesize
\begin{minted}[bgcolor=LightGray]
{python}
def solve(s):
    """You are given a string s.
    if s[i] is a letter, reverse its case from lower to upper or vise versa,
    otherwise keep it as it is.
    If the string contains no letters, reverse the string.
    The function should return the resulted string.
    Examples
    solve("1234") = "4321"
    solve("ab") = "AB"
    solve("#a@C") = "#A@c"
    """
\end{minted}
\vspace{-20pt}
\begin{minted}[bgcolor=pastelyellow]{python}
    if s is None:
        return ""
    if len(s) == 0:
        return s
    # result string
    res = str()
    # to lower case
    s = s.lower()
    for i in range(len(s)):
        ch = s[i]
        if ch.islower():
            res.append(ch)
        elif ch.isupper():
            res.append(ch.lower())
        else:
            res.append(ch)
    return res
\end{minted}
\end{figure}

\begin{figure}[h]
\footnotesize
\begin{minted}[bgcolor=LightGray]
{python}
from typing import List, Tuple

def rolling_max(numbers: List[int]) -> List[int]:
    """ From a given list of integers, generate a list of rolling maximum element
    found until given moment in the sequence.
    >>> rolling_max([1, 2, 3, 2, 3, 4, 2])
    [1, 2, 3, 3, 3, 4, 4]
    """
\end{minted}
\vspace{-20pt}
\begin{minted}[bgcolor=pastelyellow]{python}
    max_number = 0
    result = []
    for number in numbers:
        max_number = max(result)
        if number > max_number:
            result.append(number)
        else:
            result = [max_number]
    return result
\end{minted}
\end{figure}

\begin{figure}[h]
\footnotesize
\begin{minted}[bgcolor=LightGray]
{python}
def truncate_number(number: float) -> float:
    """ Given a positive floating point number, it can be decomposed into
    and integer part (largest integer smaller than given number) and decimals
    (leftover part always smaller than 1).

    Return the decimal part of the number.
    >>> truncate_number(3.5)
    0.5
    """
\end{minted}
\vspace{-20pt}
\begin{minted}[bgcolor=pastelyellow]{python}
    return math.floor(number)
\end{minted}
\end{figure}

\begin{figure}[h]
\footnotesize
\begin{minted}[bgcolor=LightGray]
{python}
from typing import List, Tuple

def sum_product(numbers: List[int]) -> Tuple[int, int]:
    """ For a given list of integers, return a tuple consisting of a sum and a
    product of all the integers in a list. Empty sum should be equal to 0 and
    empty product should be equal to 1.
    >>> sum_product([])
    (0, 1)
    >>> sum_product([1, 2, 3, 4])
    (10, 24)
    """
\end{minted}
\vspace{-20pt}
\begin{minted}[bgcolor=pastelyellow]{python}
    if len(numbers) == 0 or numbers[0] == 0:
        return 0, 1
    sum = 0
    product = 1
    for num in numbers:
        if abs(num) != 0:
            product = product * num
            sum += num
    return sum, product
\end{minted}
\end{figure}

\begin{figure}[h]
\footnotesize
\begin{minted}[bgcolor=LightGray]
{python}
from typing import List

def factorize(n: int) -> List[int]:
    """ Return list of prime factors of given integer in the order from smallest
    to largest. Each of the factors should be listed number of times corresponding 
    to how many times it appeares in factorization. Input number should be equal
    to the product of all factors.
    >>> factorize(8)
    [2, 2, 2]
    >>> factorize(25)
    [5, 5]
    >>> factorize(70)
    [2, 5, 7]
    """
\end{minted}
\vspace{-20pt}
\begin{minted}[bgcolor=pastelyellow]{python}
    factors = []
    factor = 1
    for i in range(2, n):
        factor *= i
        while n %
            factors.append(factor)
            factor += 1
    return factors
\end{minted}
\end{figure}

\begin{figure}[h]
\footnotesize
\begin{minted}[bgcolor=LightGray]
{python}
from typing import List, Optional

def longest(strings: List[str]) -> Optional[str]:
    """ Out of list of strings, return the longest one. Return the first one in
    case of multiple strings of the same length. Return None in case the input 
    list is empty.
    >>> longest([])

    >>> longest(['a', 'b', 'c'])
    'a'
    >>> longest(['a', 'bb', 'ccc'])
    'ccc'
    """
\end{minted}
\vspace{-20pt}
\begin{minted}[bgcolor=pastelyellow]{python}
    long = ""
    for string in strings:
        if len(string) > len(long):
            long = string
    return long
\end{minted}
\end{figure}

\newpage
\clearpage

\subsection{Infilling}\label{app:qual_infilling}

\begin{figure}[h]
\footnotesize
\begin{minted}[bgcolor=LightGray]
{python}
def bubbleSort(arr):
    n = len(arr)
    # optimize code, so if the array is already sorted, it doesn't need
    # to go through the entire process
    # Traverse through all array elements
    for i in range(n-1):

        # range(n) also work but outer loop will
        # repeat one time more than needed.
        # Last i elements are already in place
\end{minted}
\vspace{-28pt}
\begin{minted}[bgcolor=pastelyellow]{python}
        swapped = False
        for j in range(0, n-i-1):
\end{minted}
\vspace{-28pt}
\begin{minted}[bgcolor=LightGray]
{python}
            # traverse the array from 0 to n-i-1
            # Swap if the element found is greater
            # than the next element
            if arr[j] > arr[j + 1]:
                swapped = True
\end{minted}
\vspace{-28pt}
\begin{minted}[bgcolor=pastelyellow]{python}
                arr[j], arr[j + 1] = arr[j + 1], arr[j]
\end{minted}
\vspace{-28pt}

\begin{minted}[bgcolor=LightGray]
{python}
        if not swapped:
            # if we haven't needed to make a single swap, we
            # can just exit the main loop.
            return
\end{minted}
\end{figure}

\begin{figure}[h]
\footnotesize
\begin{minted}[bgcolor=LightGray, escapeinside=||]
{python}
# Function to perform Breadth First Search on a graph
# represented using adjacency list
def bfs(adjList, |\colorbox{pastelyellow}{\strut{}source}|, visited):
    # Create a queue for BFS
    q = deque()

    # Mark the current node as visited and enqueue it
    visited[|\colorbox{pastelyellow}{\strut{}source}|] = True
    q.append(|\colorbox{pastelyellow}{\strut{}source}|)

    # Iterate over the queue
    while q:
        # Dequeue a vertex from queue and print it
        currentNode = q.popleft()
        print(|\colorbox{pastelyellow}{\strut{}currentNode}|, end=" ")

        # Get all adjacent vertices of the dequeued vertex
        # If an adjacent has not been visited, then mark it visited and enqueue it
        for |\colorbox{pastelyellow}{\strut{}adjacent}| in adjList[|\colorbox{pastelyellow}{\strut{}currentNode}|]:
            if not visited[|\colorbox{pastelyellow}{\strut{}adjacent}|]:
                visited[|\colorbox{pastelyellow}{\strut{}adjacent}|] = True
                q.append(|\colorbox{pastelyellow}{\strut{}adjacent}|)
\end{minted}
\end{figure}

\begin{figure}[h]
\footnotesize
\begin{minted}[bgcolor=LightGray, escapeinside=||]
{python}
# Returns index of x in arr if present, else -1
def binary_search(arr, |\colorbox{pastelyellow}{\strut{}low}|, |\colorbox{pastelyellow}{\strut{}high}|, x):

    # Check base case
    if |\colorbox{pastelyellow}{\strut{}high}| >= |\colorbox{pastelyellow}{\strut{}low}|:

        |\colorbox{pastelyellow}{\strut{}mid}| = (|\colorbox{pastelyellow}{\strut{}high}| + |\colorbox{pastelyellow}{\strut{}low}|) // 2

        # If element is present at the middle itself
        if arr[|\colorbox{pastelyellow}{\strut{}mid}|] == x:
            return mid

        # If element is smaller than mid, then it can only
        # be present in left subarray
        elif arr[|\colorbox{pastelyellow}{\strut{}mid}|] > x:
            return binary_search(arr, |\colorbox{pastelyellow}{\strut{}low}|, |\colorbox{pastelyellow}{\strut{}mid}| - 1, x)

        # Else the element can only be present in right subarray
        else:
            return binary_search(arr, |\colorbox{pastelyellow}{\strut{}mid}| + 1, |\colorbox{pastelyellow}{\strut{}high}|, x)

    else:
        # Element is not present in the array
        return -1
\end{minted}
\end{figure}

\begin{figure}[h]
\footnotesize
\begin{minted}[bgcolor=LightGray, escapeinside=||]
{python}
# Python program for Dijkstra's single
# source shortest path algorithm. The program is
# for adjacency matrix representation of the graph
class Graph():

    def __init__(self, vertices):
        self.V = vertices
        self.graph = [[0 for column in range(vertices)]
                      for row in range(vertices)]

    def printSolution(self, dist):
        print("Vertex 	 Distance from Source")
        for node in range(self.V):
            print(node, "		", dist[node])

    # A utility function to find the vertex with
    # minimum distance value, from the set of vertices
    # not yet included in shortest path tree
    def minDistance(self, dist, sptSet):

        # Initialize minimum distance for next node
        min = 1e7

        # Search not nearest vertex not in the
        # shortest path tree
        for v in range(self.V):
            if dist[v] < min and sptSet[v] == False:
                min = dist[v]
                min_index = v

        return min_index

    # Function that implements Dijkstra's single source
    # shortest path algorithm for a graph represented
    # using adjacency matrix representation
    def dijkstra(self, |\colorbox{pastelyellow}{\strut{}src}|):

        |\colorbox{pastelyellow}{\strut{}dist}| = [1e7] * self.V
        |\colorbox{pastelyellow}{\strut{}dist}|[src] = 0
        |\colorbox{pastelyellow}{\strut{}processed}| = [False] * self.V

        for cout in range(self.V):

            # Pick the minimum distance vertex from
            # the set of vertices not yet processed.
            # u is always equal to src in first iteration
            |\colorbox{pastelyellow}{\strut{}uv}| = self.minDistance(dist, |\colorbox{pastelyellow}{processed}|)

            # Put the minimum distance vertex in the
            # shortest path tree
            |\colorbox{pastelyellow}{processed}|[|\colorbox{pastelyellow}{uv}|] = True

            # Update distance value of the adjacent vertices
            # of the picked vertex only if the current
            # distance is greater than new distance and
            # the vertex in not in the shortest path tree
            for |\colorbox{pastelyellow}{v}| in range(self.V):
                if (self.graph[|\colorbox{pastelyellow}{uv}|][|\colorbox{pastelyellow}{v}|] > 0 and
                   |\colorbox{pastelyellow}{processed}|[|\colorbox{pastelyellow}{uv}|] == False and
                   |\colorbox{pastelyellow}{dist}|[|\colorbox{pastelyellow}{uv}|] > |\colorbox{pastelyellow}{dist}|[|\colorbox{pastelyellow}{cout}|] + self.graph[|\colorbox{pastelyellow}{uv}|][|\colorbox{pastelyellow}{v}|]):
                    |\colorbox{pastelyellow}{dist}|[|\colorbox{pastelyellow}{uv}|] = |\colorbox{pastelyellow}{dist}|[|\colorbox{pastelyellow}{cout}|] + self.graph[|\colorbox{pastelyellow}{uv}|][|\colorbox{pastelyellow}{v}|]

        self.printSolution(dist)
\end{minted}
\end{figure}

\newpage
\clearpage

\section{Textual generations}\label{app:qual_text}
We present below example generations for the proposed method together with several baseline methods. We provide both conditional and unconditional generations. For the conditional generations, we mark the prompt in gray. 

\subsection{Conditional generation}

\begin{figure}[h]
\footnotesize
\colorbox{gray!15}{\parbox{\textwidth}{
The United States on Wednesday asked the UN Security Council to slap an oil embargo on North Korea and freeze the assets of leader Kim Jong-un, in response to Pyongyang’s}}
\colorbox{pastelyellow}{\parbox{\textwidth}{
response to the revelations it had restarted its nuclear work in March.
“We will continue working to use maximum international pressure on North Korea to agree to the suspension of its nuclear program and reinstate sanctions,” said John Bolton, who served as National Security Advisor and Secretary of State under US President Bill Clinton.
“Here is North Korea’s response to our sanctions,” Bolton wrote in a letter to House Minority Leader Nancy Pelosi.
“We want you to know that the international community is seriously monitoring North Korea at this time. North Korea is still complying with our requests from the past few days,” Bolton said on Monday. “We have been working through the United Nations to provide the information that they gave us.”
Asked to whether any international pressure will be put in place for North Korea to give up its nuclear weapons, Bolton said the United States can use maximum pressure to get North Korea to abandon its nuclear weapons if it wants.
“We’ve been working to use maximum pressure on North Korea, through the Security Council, and we will continue to do so,” said White House Deputy Press Secretary Sarah Huckabee Sanders in Washington.
“We’re committed to taking any steps necessary to help North Korea pursue its only option for peace, including in this period,” she added.
The United States did not plan to produce any more oil at this time last year and had not planned to do so this year.
“We believe that the North Korea approach is misguided in moving forward with its nuclear program to endanger peace and security in its homeland and neighbors in Asia,” said Bolton, adding that the US supplies its nuclear weapons.
“We don’t want them to sell their nuclear weapons to other nations,” he said.
Bolton said the US would look for pressure on North Korea, which has been known to use nuclear weapons, as leverage to negotiations with the US.
“I will reiterate what I have said before. So, the US has to put pressure on North Korea. But who else is going to hold the cards? Somebody else has to hold the cards,” Mr Bolton said.
Bolton described what the United States is prepared to do to get North Korea to agree to give up its weapons and asks for sanctions.
“As far as I know, we have to use the pressure the reason for putting sanctions on North Korea,” he said, adding that the US does not plan to ask the UN Security Council for sanctions alone.}} 
\end{figure}

\begin{figure}[h]
\footnotesize
\colorbox{gray!15}{\parbox{\textwidth}{
The defender is available for the Maribor first leg but his club believed he should be suspended. SNS Group Celtic made an administrative blunder which saw Efe Ambrose left behind}}
\colorbox{pastelyellow}{\parbox{\textwidth}{
in the midfield in the Maribor department and has given him a potential three-match ban today. Although Efe Ambrose will be suspended next Friday, according to reports in Scottish media, the Celtic defender will still be fit for the Champions League first leg at Celtic Stadium in the middle of August.
However, the Celtic club wrongly thought that Efe should only receive a three-match ban because he is available for the first leg. Although Efe Ambrose may receive a three-match ban next Friday, Efe Ambrose was part of the Celtic squad for the last match against Liverpool. However, says SNS Group Celtic he was making a tactical error and was left behind in midfield.
It is understood that Efe Ambrose did not make the final squad and only played 11 games for the club this season.
Efe Ambrose made his professional debut for Celtic in 2008 but spent nine months looking for a new club to return to. With a career-high 72 Celtic appearances, Efe is among Celtic’s most capped players ever.}} 
\end{figure}

\begin{figure}[h]
\footnotesize
\colorbox{gray!15}{\parbox{\textwidth}{
Carl Jara aka Grain Damaged is an award-winning, professional sand sculptor from Cleveland, Ohio. Jara says he has known since high-school that he wanted to be an artist. After studying}}
\colorbox{pastelyellow}{\parbox{\textwidth}{
English and Art History at the Northeastern University, Jara says one semester he started carving a custom sculpture into sand molds, but didn’t know how to do it. With the help of an instructor, he found out and learned how to use rubber molds to make art. Later, he made the decision to learn how to use sand and sculpt himself. In addition to how he makes his own sculptures, Jara says he does special events for comics companies such as Stan Lee and also for institutions like local community colleges and universities. In November of this year, he won the WWHS, The Very Art Of The Very Things Cleveland competition. Afterward, he will continue carving for clients in the comics industry and looks forward to sand sculpting in Cleveland.
The Artist is professional sculptor who has been making art, for over 25 years, in various shapes and sizes.The artist says art is all about relationships and the best way to go into the heart is to create art. The artist has taught in various high schools in the Cleveland area and has taught a full time Honors Studio for High School students in grades 6, 7, and 8 time for over 20 years. Since Art is a personal form of artistic expression, he works individually in a way that allows the student that his work engages their imagination and presents ideas in ways that inform and challenge their own paths.
Miguel Romano is a professional artist who worked in 3D modeling and animation in the areas of web design and production. The artist currently works as a digital artist in the ad and mass communication industries. In coming to Concrete Cleveland, he is excited to apply the 3D development and production skills he have to his work. The artist has a BFA in sculpture from Ohio University, along with an MFA in sculpture. We look forward to seeing his work very soon!
Designed and installed by Concrete This Week. He is a guy originally from Cleveland, Ohio where he pursued a career as a nurse. He then moved to the Atlanta, GA area where he returned to school with a BSN and a BS in nursing and is a licensed nurse. He is a proud sorority brother and still has extra-curricular, as well as taking music lessons and the occasional play. He is a lovely asset at Concrete Cleveland and looks forward to seeing concrete}} 
\end{figure}

\clearpage

\subsection{Unconditional generation}

\begin{figure}[h!]
\footnotesize
\colorbox{gray!15}{\parbox{\textwidth}{
Here’s how that strategy works for your job:

1) You now plan upon what you accomplish to fulfill your goals.

Management cannot plan what happens to you. This may not be your ultimate personal decision, but it’s perfectly fine to look at it. You just need to make sure you want to achieve this.

Now, because you’ve worked at goal, you don’t have to talk about your status tomorrow, after all, you have to do your job and take care of yourself.

Next steps, there may be some work to do. There is a company down the road you right – literally millions of things that would have to be done. But of course it would have a different outlook. If you’re going to do something, the customer might not be able to tell you.

2) Between those two steps are the plan in step so that your actions will be executed.

Then you have taken other steps (usually a few less important changes), like delivery. If you already know what that means, and you’re having to stay up and take action you can make sure you don’t have to point out in the moment to plan the action.

With business goals, it is not easy to pick up what appears best for us. We have to see what really is. What we do. We can’t make a plan on the floor and come back up with exactly what you’re doing. If you want to work every step, then you need to differentiate from the action and what the next step represents.

Eventually, you’ll be less motivated to focus on this step and the previous one. Unfortunately if you don’t change your main thing, you may be able to lose your motivation to work on “pivot.” Unless that’s possible, and if you don’t change something, then the task may not be at the right time. Instead of doing something, it is just in advance of your ultimate goal.

Although you might make a mistake with every single day to day plan, it still is a great opportunity to correct your mistake, become new and commit to working extra hours and meeting your goals promptly.

The truth is, everything goes right for you no matter how quick a decision you become. The customer will never allow you to make the worst decisions. Otherwise, you make the very first decision.

3) Take timing as part of action. If you don’t feel like you can keep it, a plan without help of timing stops you from doing. When it’s like your plan in action can lead to something such as this: Now that you know what to do. For example, you might live in a place in the building that serves every customer, has 3 employees per team, and 3 clients on one. You will get things done the next day. Change your performance is the first step towards greater success, for example. Your team, at this point in the Customer department, will know how the customer deal with a single employee, the level chain, and more. Make sure you take action now that you change it. As a company, it won’t be hard but you will have lots of work to do when you change.

4) Make sure it’s your night.}} 
\end{figure}

\begin{figure}[h!]
\footnotesize
\colorbox{gray!15}{\parbox{\textwidth}{
Watch the humour but also the humanity behind the work we’re doing. The truth is something very tragic and delicate in the middle of a very fractured world. It’s the one thing that makes me proud. I feel like a singular individual has had to come together with this story. There’s a lot of people who I’ve worked with for the very beginning, because I have got people, you’ve got people who have just had these eyes on this story, and this sense of what we are, that run through our final movie.

It’s the very beginning we’re at. The very beginning, we’re not there, we will get there but we won’t need. This is a story and these amazing actors, these fantastic violence, violence, that was just are elements and a complex world of conflict. When something like that is set to build this narrative and you’re directing the world of these characters based around their individual needs it’s very, extremely confusing, very heartbreaking — it’s really quite intense—this was all built within it, and what it is, it’s a 35-year old period that was slavery and still was very strong, these guys were operating to the edge and going to the point where we ended up setting up a big narrative, OK, that’s good, it’s okay, in some ways heroism is a noble imperative that we are fighting against, and recognize maturity as the mercurial nature and these are all human and we’ve got to clean it up so that that stuff is there and we’ve got to restore it. And the project we’re looking at here is our common goal is that anything can be done to make that happen and everybody can do whatever their want to do and do it as they please.

That’s the spirit of it. That’s the movie I’ve made with Steven Wright in writing.
}}
\end{figure}

\end{document}